\theoremstyle{thmstyletwo}%
\newtheorem{theorem}{Theorem}[section]
\newtheorem{proposition}[theorem]{Proposition}%
\newtheorem{remark}[theorem]{Remark}%
\numberwithin{equation}{section}
\newtheorem{lemma}[theorem]{Lemma}%
\newtheorem{corollary}[theorem]{Corollary}%
\newtheorem{assumption}[theorem]{Assumption}
\renewcommand{\vec}{\mathbf}
\newcommand{\R}{\mathbb{R}}
\newcommand{\E}{\mathbb{E}}
\newcommand{\T}{\mathsf{T}}
\newcommand{\F}{\mathsf{F}}
\newcommand{\ATA}{\vec{A}^\T\vec{A}}
\newcommand{\lmin}{\lambda_{\textup{min}}}
\newcommand{\lmax}{\lambda_{\textup{max}}}
\newcommand{\lave}{\lambda_{\textup{ave}}}
\newcommand{\condCbd}{M} 
\begin{document}

\copyrightyear{2024}
\firstpage{1}


\title[On the fast convergence of minibatch heavy ball momentum]{On the fast convergence of minibatch heavy ball momentum}

\author{Raghu Bollapragada
\address{\orgdiv{Operations Research and Industrial Engineering}, \orgname{The University of Texas at Austin}, \orgaddress{\street{204 E. Dean Keeton Street}, \postcode{78712}, \state{TX}, \country{USA}}}}
\author{Tyler Chen
\address{\orgdiv{Mathematics}, \orgname{New York University}, \orgaddress{\street{251 Mercer Street}, \postcode{10012}, \state{NY}, \country{USA}}\\
\orgdiv{Computer Science and Engineering}, \orgname{New York University}, \orgaddress{\street{370 Jay Street}, \postcode{11201}, \state{NY}, \country{USA}}}}
\author{Rachel Ward
\address{\orgdiv{Mathematics}, \orgname{The University of Texas at Austin}, \orgaddress{\street{2515 Speedway}, \postcode{78712}, \state{TX}, \country{USA}}\\
\orgdiv{Computational Engineering and Sciences}, \orgname{The University of Texas at Austin}, \orgaddress{\street{201 E. 24th Street}, \postcode{78712}, \state{TX}, \country{USA}}}}

\authormark{Bollapragada \emph{et al.}}


\received{Date}{0}{Year}
\revised{Date}{0}{Year}
\accepted{Date}{0}{Year}

\editor{Associate Editor: Name}

\abstract{
Simple stochastic momentum methods are widely used in machine learning optimization, but their good practical performance is at odds with an absence of theoretical guarantees of acceleration in the literature.
In this work, we aim to close the gap between theory and practice by showing that stochastic heavy ball momentum retains the fast linear rate of (deterministic) heavy ball momentum on quadratic optimization problems, at least when minibatching with a sufficiently large batch size. 
The algorithm we study can be interpreted as an accelerated randomized Kaczmarz algorithm with minibatching and heavy ball momentum.
The analysis relies on carefully decomposing the momentum transition matrix, and using new spectral norm concentration bounds for products of independent random matrices.
We provide numerical illustrations demonstrating that our bounds are reasonably sharp.}
\keywords{Momentum; Stochastic Gradient; Linear Systems; Least Squares}


\maketitle

\section{Introduction}

The success of learning algorithms trained with stochastic gradient descent (SGD) variants, dating back to the seminal AlexNet architecture \citep{krizhevsky2012imagenet}—arguably initiating the ``deep learning revolution"—and empirically verified comprehensively in \citep{sutskever2013importance}, emphasizes the importance of incorporating simple momentum for achieving rapid convergence in neural network learning. 
Although more complex momentum (or acceleration) algorithms have been proposed \citep{nesterov83,jin2018accelerated, bubeck2015geometric,liu2015accelerated,jain2018accelerating,van2017fastest,cyrus2018robust,allen2018katyusha}, demonstrating faster convergence rates than plain SGD on standard classes of loss functions, Polyak's original simple momentum update \citep{polyak1964methods} defies theory by remaining highly effective in practice and remains a popular choice for many applications. 
Despite several studies analyzing the performance of stochastic momentum methods \citep{kidambi2018insufficiency, can2019accelerated,loizou2017linearly, loizou2020momentum, gitman2019understanding, liu2020improved, flammarion2015averaging, gadat2018, sebbouh2021almost, yan2018unified}, a gap persists between existing theoretical guarantees and their superior practical performance. 
We aim to bridge this gap by analyzing the properties of simple stochastic momentum methods in the context of quadratic optimization.

Given a $n\times d$ matrix $\vec{A}$ and a length $n$ vector $\vec{b}$, the linear least squares problem
\begin{align}
\label{eqn:lls}
 \min_{x \in \R^d}\; \; & f(\vec{x}); \quad \quad f(\vec{x}) := \frac{1}{2} \| \vec{A} \vec{x} - \vec{b} \|^2 =\frac{1}{2}  \sum_{i=1}^{n}| \vec{a}_i^\T \vec{x} - b_i |^2
= \sum_{i=1}^{n} f_i(\vec{x})
\end{align}
is one of the most fundamental problems in optimization.
One approach to solving  \cref{eqn:lls} is Polyak's heavy ball momentum (HBM) \citep{polyak1964methods}, also called `standard' or `classical' momentum.
\ref{heavyball} updates the parameter estimate $\vec{x}_k$ for the solution $\vec{x}^*$     as
\begin{equation*}
    \vec{x}_{k+1} = \vec{x}_k - \alpha_k \nabla f(\vec{x}_k) + \beta_k \vec{m}_k
    ,\qquad
    \vec{m}_{k+1} = \beta_k \vec{m}_k - \alpha_k \nabla f(\vec{x}_k),
\end{equation*}
where $\alpha_k$ and  $\beta_k$ are the step-size and momentum parameters respectively.
This is equivalent to the update
\begin{equation}
\label{heavyball}
\tag{HBM}
    \vec{x}_{k+1} = \vec{x}_k - \alpha_k \nabla f(\vec{x}_k) + \beta_k (\vec{x}_k - \vec{x}_{k-1}).
\end{equation}

The gradient of the objective \cref{eqn:lls} is easily computed to be $\nabla f(\vec{x}) = \vec{A}^\T(\vec{A}\vec{x}- \vec{b})$, and when $\ATA$ has finite condition number $\kappa = \lmax/ \lmin$, where $\lmax$ and $\lmin$ are the largest and smallest eigenvalues of $\ATA$, \ref{heavyball} with properly chosen constant step-size and momentum parameters $\alpha_k = \alpha$ and $\beta_k = \beta$  provably attains the optimal linear rate 
\begin{equation*}
\| \vec{x}_k - \vec{x}^{*} \| \leq C_{\textup{HBM}} \left( 1 - \frac{1}{\sqrt{\kappa}} \right)^k \| \vec{x}_0 - \vec{x}^{*} \|,  \quad  C_{\textup{HBM}} > 0. 
\end{equation*}

When $\beta_k = 0,$ \ref{heavyball}  reduces to the standard gradient descent algorithm which, with the optimal choice of step-sizes $\alpha_k$, converges at a sub-optimal linear rate
\begin{equation*}
\| \vec{x}_k - \vec{x}^{*} \| \leq C_{\textup{GD}} \left( 1 - \frac{1}{\kappa} \right)^k \| \vec{x}_0 - \vec{x}^{*} \|, \quad  C_{\textup{GD}} > 0. 
\end{equation*}

On large-scale problems, computing the gradient $\nabla f(\vec{x})$ can be prohibitively expensive. 
For problems such as \cref{eqn:lls}, it is common to replace applications of the gradient with a minibatch stochastic gradient
\[
\nabla f_{S_k}(\vec{x}) = \frac{1}{B} \sum_{j \in S_k} \frac{1}{p_j} \nabla f_j(\vec{x}),
\]
where $S_k$ contains $B$ indices drawn independently with replacement from  $\{1,2,\ldots, n\}$ where, at each draw, $p_j$ is the probability that an index $j$ is chosen.
Note that this sampling strategy ensures $\E [\nabla f_{S_k}(\vec{x})] = \nabla f(\vec{x})$.

We denote by \emph{minibatch-heavy ball momentum} (Minibatch-HBM) the following algorithm: starting from initial conditions $\vec{x}_1=\vec{x}_0$, iterate until convergence \begin{equation}
\label{stoc_heavyball}
\tag{Minibatch-HBM}
    \vec{x}_{k+1} = \vec{x}_k - \alpha_k \nabla f_{S_k}(\vec{x}_k) + \beta_k (\vec{x}_k - \vec{x}_{k-1}).
\end{equation}
In the case of \cref{eqn:lls}, the minibatch stochastic gradient can be written
\begin{equation}\label{eqn:mb_gradient}
    \nabla f_{S_k}(\vec{x}_k) = \frac{1}{B} \sum_{j \in S_k} \frac{1}{p_j} \vec{a}_j(\vec{a}_j^\T \vec{x}_k - b_j),
\end{equation}
where $\vec{a}_j^\T$ is the $j$th row of $\vec{A}$ and $b_j$ is the $j$th entry of $\vec{b}$.
\begin{assumption}\label{asm:sampling_probs}
Throughout, we will assume that, for some $\eta\geq 1$, the sampling probabilities are such that
\begin{equation}
\label{eqn:sampling_probs}
    \eta  p_j \geq \frac{{\|\vec{a}_j\|^2}}{{\|\vec{A}\|_\F^2}},
    \qquad j=1,2,\ldots, n,
\end{equation}
where $\|\cdot\|_\F$ represents the matrix Frobenious norm.
\end{assumption}
\begin{remark}
If rows are sampled proportionally to their squared norm $p_j = \|\vec{a}_j\|^2/\|\vec{A}\|_\F^2$, we have $\eta = 1$.
If rows are sampled i.i.d. uniformly, $p_j = 1/n$ for $j=1,\ldots, n$, \eqref{eqn:sampling_probs} is satisfied with $\eta = n \max_j \|\vec{a}_j\|^2/\|\vec{A}\|_\F^2$.
\end{remark}

Applied to the problem\cref{eqn:lls}, plain stochastic gradient descent $(B=1,\beta=0)$ with an appropriately chosen step-size ($p_j = \| \vec{a}_j \|^2 / \|\vec{A}\|_\F^2$) is equivalent to the randomized Kaczmarz (RK) algorithm  if importance weighted sampling ($\eta=1$) is used \citep{needell2014stochastic}.
The standard version of RK extends the original cyclic Kaczmarz algorithm \citep{kaczmarz1937} and, as proved in \citep{strohmer2008randomized}, converges in a number of iterations scaling with $d\bar{\kappa}$, where $\bar{\kappa} = \lave/\lmin$ is the \emph{smoothed} condition number of $\ATA$, and $\lave$ and $\lmin$ are the average and smallest eigenvalues of $\ATA$, respectively.
Note the important relationship between the smoothed and the standard condition numbers:
\begin{equation*}
\bar{\kappa} \leq \kappa \leq d \bar{\kappa}.
\end{equation*}
This relationship implies that, when $n \geq d\sqrt{\kappa}$, RK at least matches (up to constants) the performance of \ref{heavyball}. 
If $\bar{\kappa} \ll\kappa$ or $n\gg d\sqrt{\kappa}$ then RK can significantly outperform  \ref{heavyball}, at least in terms of total number of row products.

While the number of row products of RK is reduced compared to \ref{heavyball}, the number of iterations required to converge is increased. 
In practice, running times are not necessarily directly associated with the number of row products, and instead depend on other factors such as communication and memory access patterns. 
These costs often scale with the number of iterations, so it is desirable to understand whether the iteration complexity of \ref{stoc_heavyball} can be reduced to that of \ref{heavyball}.

\begin{figure}
    \centering
    \includegraphics[width=\textwidth]{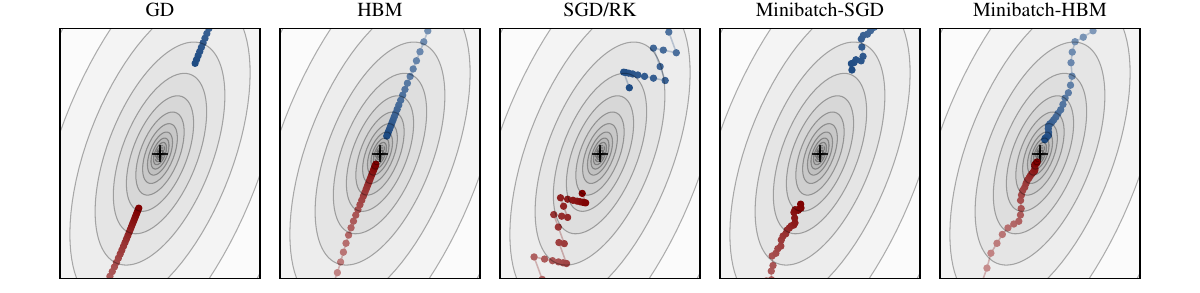}
    \caption{
    Sample convergence trajectories for various iterative methods applied to a quadratic problem  
    \cref{eqn:lls} with $n=200$ and $d=2$.
    Gradient descent with heavy ball momentum (\ref{heavyball}) allows for accelerated convergence over gradient descent (GD).
    Stochastic gradient descent (SGD) allows for lower per iteration costs, and the use of batching (Minibatch-SGD) reduces the variance of the iterates.
    While batching and momentum are often used simultaneously (\ref{stoc_heavyball}), convergence guarantees have remained elusive, even for quadratic objectives \cref{eqn:lls}.
    In this paper we prove that, on such objectives, \ref{stoc_heavyball} converges linearly at the same rate as \ref{heavyball}, provided the batch size is sufficiently large in a precise sense.
    }
    \label{fig:2d_compare}
\end{figure}

\subsection{Contributions}

We aim to make precise the observation that stochastic momentum affords acceleration in the minibatch setting.
An illustration of this phenomenon is provided in \cref{fig:2d_compare}.
Our main theoretical result is a proof that the linear rate of convergence of \ref{heavyball} can be matched by \ref{stoc_heavyball}, provided the batch size is  larger than a critical size.
Informally, this result can be summarized as follows:
\begin{theorem}
\label{thm:B_order_informal}
Consider \ref{stoc_heavyball} applied to a strongly convex quadratic objective \cref{eqn:lls} with stochastic gradients \cref{eqn:mb_gradient} whose sampling probabilities satisfy \cref{eqn:sampling_probs} with parameter $\eta \geq 1$.
Suppose that the minimizer $\vec{x}^*$ satisfies $\vec{A}\vec{x}^* = \vec{b}$.
Then, with the same fixed step-size and momentum parameters $\alpha_k = \alpha$, $\beta_k = \beta$ as \ref{heavyball}, there exists a constant $C>0$ such that, if $\kappa$ is sufficiently large and $B \geq C \eta d\log(d) \bar{\kappa} \sqrt{\kappa}$, the \ref{stoc_heavyball} iterates converge in expected norm at least at a linear rate $1-1/\sqrt{\kappa}$.
\end{theorem}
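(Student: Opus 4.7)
Let $\vec{e}_k := \vec{x}_k - \vec{x}^*$. Since $\vec{A}\vec{x}^* = \vec{b}$, the stochastic gradient satisfies $\nabla f_{S_k}(\vec{x}_k) = \vec{H}_k \vec{e}_k$, where $\vec{H}_k := \tfrac{1}{B}\sum_{j \in S_k} p_j^{-1}\vec{a}_j\vec{a}_j^\T$ obeys $\E[\vec{H}_k]=\ATA$. Stacking $\vec{z}_k := [\vec{e}_k^\T,\vec{e}_{k-1}^\T]^\T$ turns \ref{stoc_heavyball} into a random linear recursion $\vec{z}_{k+1} = \vec{T}_k\vec{z}_k$ with
\begin{equation*}
\vec{T}_k = \begin{bmatrix} (1+\beta)\vec{I}-\alpha\vec{H}_k & -\beta\vec{I} \\ \vec{I} & \vec{0} \end{bmatrix},
\qquad
\vec{T} := \E[\vec{T}_k],
\qquad
\vec{E}_k := \vec{T}_k - \vec{T}.
\end{equation*}
The $\vec{E}_k$ are mean-zero, independent across $k$, and supported on the top block. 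The theorem reduces to showing $\E\|\vec{T}_{k-1}\cdots\vec{T}_0\| \lesssim (1-1/\sqrt{\kappa})^k$.

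\textbf{Change of basis and variance estimate.} Diagonalising $\ATA = \vec{Q}\,\mathrm{diag}(\lambda_j)\vec{Q}^\T$ decouples $\vec{T}$ into $2\times 2$ companion blocks $\vec{T}_{\lambda_j}$. Polyak's Chebyshev choice $\beta = (1-1/\sqrt{\kappa})^2$ and $\alpha = (1+\sqrt{\beta})^2/\lmax$ places both eigenvalues of every $\vec{T}_{\lambda_j}$ on the circle of radius $\sqrt{\beta}$, so there is a block-diagonal similarity $\vec{P}$ with $\|\vec{P}\vec{T}\vec{P}^{-1}\| = \sqrt{\beta}$; denote $\condCbd := \kappa(\vec{P})$. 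The hypothesis ``$\kappa$ sufficiently large'' keeps the eigenvalues $\lambda_j$ away from the defective Jordan endpoints of the Chebyshev region, so $\condCbd$ is controlled uniformly in $j$. Setting $\tilde{\vec{T}}_k := \vec{P}\vec{T}_k\vec{P}^{-1} = \tilde{\vec{T}} + \tilde{\vec{E}}_k$, \cref{asm:sampling_probs} gives $\|p_j^{-1}\vec{a}_j\vec{a}_j^\T\|\leq\eta\|\vec{A}\|_\F^2$ and a matrix-variance bound $\|\sum_j p_j^{-1}\|\vec{a}_j\|^2\vec{a}_j\vec{a}_j^\T\|\leq\eta\|\vec{A}\|_\F^2\lmax$, so Matrix Bernstein yields
\begin{equation*}
\E\|\tilde{\vec{E}}_k\|^2 \;\lesssim\; \condCbd^2\,\alpha^2\,\eta\,\|\vec{A}\|_\F^2\,\lmax\,\log(d)/B \;\asymp\; \condCbd^2\,\eta\,d\,\bar{\kappa}\,\log(d)/(B\kappa).
\end{equation*}

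\textbf{Products of random matrices.} The core step is to bound
\begin{equation*}
\E\|\tilde{\vec{T}}_{k-1}\cdots\tilde{\vec{T}}_0\|^2 \;\leq\; \E\,\mathrm{tr}\bigl(\tilde{\vec{T}}_0^*\cdots\tilde{\vec{T}}_{k-1}^*\tilde{\vec{T}}_{k-1}\cdots\tilde{\vec{T}}_0\bigr).
\end{equation*}
I would expand the product multilinearly in the $\tilde{\vec{E}}_i$'s around $\tilde{\vec{T}}$ and use independence: any monomial in which some $\tilde{\vec{E}}_i$ appears an odd number of times vanishes in expectation, leaving only ``paired'' terms indexed by subsets $\{i_1,\dots,i_m\}\subseteq[k]$. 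Bounding each paired term via $\|\tilde{\vec{T}}\|=\sqrt{\beta}$ and summing over subsets gives
\begin{equation*}
\E\|\tilde{\vec{T}}_{k-1}\cdots\tilde{\vec{T}}_0\|^2 \;\lesssim\; \sum_{m=0}^k\binom{k}{m}\beta^{k-m}\bigl(c_d\,\E\|\tilde{\vec{E}}_0\|^2\bigr)^m \;\leq\; \bigl(\sqrt{\beta}+\sqrt{c_d\,\E\|\tilde{\vec{E}}_0\|^2}\bigr)^{2k},
\end{equation*}
where $c_d$ is a dimensional factor from reducing operator norms to traces. Requiring $c_d\,\E\|\tilde{\vec{E}}_0\|^2\lesssim 1/\kappa$ preserves the $1-1/\sqrt{\kappa}$ rate, and substituting the variance estimate above gives the stated lower bound on $B$ (with the extra $\sqrt{\kappa}$ traceable to $\condCbd^2$ inside the variance estimate).

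\textbf{Main obstacle.} The decisive hurdle is the product bound: standard matrix concentration tools (Matrix Bernstein, Oliveira) sharply control \emph{sums} of independent random matrices, whereas here we need a tail estimate for a \emph{product} of $k$ random matrices whose common mean $\tilde{\vec{T}}$ is non-normal. Spectral radius is not submultiplicative under perturbations, and matrix logarithms do not linearise products of non-commuting factors, so direct application of existing inequalities fails. The chaos-expansion plus trace argument sketched above is the new ingredient hinted at in the abstract; without it, the naive bound $\E\prod\|\tilde{\vec{T}}_i\|\leq(\sqrt{\beta}+\E\|\tilde{\vec{E}}\|)^k$ would require $\E\|\tilde{\vec{E}}\|\lesssim 1/\sqrt{\kappa}$ rather than $\sqrt{\E\|\tilde{\vec{E}}\|^2}\lesssim 1/\sqrt{\kappa}$, overshooting the true threshold by factors polynomial in $d$. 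A secondary but genuine technical point is the uniform control of $\condCbd$ at the spectrum endpoints, where the $2\times 2$ blocks degenerate to Jordan form; the assumption that $\kappa$ is sufficiently large provides the margin that prevents this degeneration from contaminating the rate.
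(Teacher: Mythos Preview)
Your high-level architecture matches the paper's: write the error as a product of i.i.d.\ random transition matrices, change basis so the mean has operator norm $\sqrt{\beta}$, control the per-step fluctuation with matrix Bernstein, then bound the product. Two points diverge in ways that matter.

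\textbf{The defectiveness handling is a genuine gap.} Your claim that ``$\kappa$ sufficiently large keeps the eigenvalues $\lambda_j$ away from the defective Jordan endpoints'' is false: with $\alpha=(1+\sqrt{\beta})^2/\lmax$ the upper endpoint of the complex-eigenvalue region is exactly $\lmax$, so the $2\times 2$ block at $\lambda_{\max}$ is defective for every $\kappa$, no diagonalising $\vec{P}$ exists, and $\condCbd=\infty$. The paper fixes this not by taking $\kappa$ large but by \emph{perturbing the parameters}: for $\gamma\in(0,\lmin)$ it sets $L=\lmax+\gamma$, $\ell=\lmin-\gamma$, defines $\alpha,\beta$ from $L,\ell$ (\cref{eq:params_L,eq:params_alpha}), and then proves the explicit bound $\condCbd(\alpha,\beta)\leq 4/(\alpha\sqrt{\gamma(\gamma+\lmax-\lmin)})$ (\cref{thm:HBM_condno}). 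In the corollary one takes $\gamma=c_1\lmin$, which gives $(\alpha\lmin\condCbd)^2=\mathcal{O}(\kappa^{-1})$, i.e.\ $\condCbd^2=\mathcal{O}(\kappa)$ rather than the $\mathcal{O}(\sqrt{\kappa})$ you assert.

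\textbf{The product bound is obtained differently.} You sketch a chaos expansion of $\E\,\mathrm{tr}(\vec{Z}^*\vec{Z})$. The paper instead applies an existing black-box product inequality (\cref{prop:products}, Corollary~5.4 of Huang et al.): if $\|\E\vec{X}_i\|\leq q$ and $\E\|\vec{X}_i-\E\vec{X}_i\|^2\leq\sigma^2 q^2$ then $\E\|\vec{X}_k\cdots\vec{X}_1\|\leq q^k\exp\big(\sqrt{2v\max\{2v,\log k\}}\big)$ with $v=k\sigma^2$. Choosing $\sigma^2=\log(k^*)/(2k^*)$ with $k^*/\log(k^*)\asymp\sqrt{\kappa}$ makes the exponential factor only polynomial in $k$; the extra $\sqrt{\kappa}$ in the batch-size threshold then enters through this $k^*$ combined with $\condCbd^2=\mathcal{O}(\kappa)$ in the variance, not through $\condCbd^2$ alone. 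Your expansion is not unreasonable as an alternative, but as written the dimensional factor $c_d$ sits inside the $m$-th power, which would cost an extra factor of $d$ in $B$; the trace-to-norm conversion should contribute a \emph{single} overall factor of $2d$ (each surviving term equals $\E\|\vec{M}_S\|_{\mathrm{HS}}^2$ for a fixed-choice product $\vec{M}_S$), giving $2d(\beta+\E\|\tilde{\vec{E}}\|^2)^k$ rather than $(\beta+c_d\E\|\tilde{\vec{E}}\|^2)^k$.
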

For a more precise statement of \cref{thm:B_order_informal}, see \cref{thm:B_order}.

\begin{remark}
    The bound for $B$ does not depend directly on $n$, and in many situations $B\ll n$.
    In these cases, \ref{stoc_heavyball} offers a provable speedup over \ref{heavyball}.
    \Cref{thm:linear_cons} provides a more fine-grained relationship between the momentum and step-size parameters than \cref{thm:B_order_informal}. In particular, it shows that it is always possible to take $B<n$ and have \ref{stoc_heavyball} converge similar to \ref{heavyball}, albeit at the cost of slowed convergence compared to the rate of convergence of \ref{heavyball} with the optimal $\alpha,\beta$.
\end{remark}

Owing to the equivalence between SGD on convex quadratics and the randomized Kaczmarz algorithm \citep{needell2014stochastic} our convergence guarantees give a provable iteration complexity $\sqrt{\kappa}$ for randomized Kaczmarz type algorithms.
Our analysis method is quite general, and can be used to certify fast linear convergence for various forms of momentum beyond heavy ball momentum; in \cref{sec:NAG} we illustrate the generality of the approach by proving an analogous convergence result for a minibatch Nesterov's accelerated gradient method in the setting of linear regression.

\subsection{Literature Review}

In the remainder of this section, we provide an overview of state-of-art existing results for row-sampling methods for solving (consistent) linear squares problems.
A summary is given in \cref{tab:compare}.

\begin{table}
    \caption{
    Runtime comparisons for row-sampling iterative methods for solving a consistent linear least squares problem \cref{eqn:lls} to constant accuracy when $\vec{A}\in\R^{n\times d}$ for large condition number $\kappa$.
    Constants and a logarithmic dependence on the accuracy parameter are suppressed.
    Here $\kappa$ and $\bar{\kappa}$ are the regular and smoothed condition numbers of $\ATA$. 
    Due to practical considerations such as parallelization, data movement, caching, energy efficiency, etc., the real-world cost of an iteration does not necessarily scale linearly with the number of row products. 
    }
    \label{tab:compare}
    \centering
    \begin{tabular}{rccp{6.6cm}}
    \toprule
    Algorithm & iterations &  \# row prods/iter. & references \\ \midrule
    \ref{heavyball} & $\sqrt{\kappa}$ & $n$ & \cite{hestenes1952methods}, \cite{polyak1964methods}, \cite{liesen2013krylov}  \\ 
    SGD/RK  & $d \, \bar{\kappa} $ & $1$ & \cite{strohmer2008randomized}, \cite{needell2014stochastic}\\
    ARK & $d\sqrt{\bar{\kappa}} $ & $1$ & \cite{liu2015accelerated} \\
    \ref{stoc_heavyball} & $\sqrt{\kappa}$ & $ B \gtrsim d \log(d) \bar{\kappa}\sqrt{\kappa}$ & (this paper)\\
    \bottomrule
    \end{tabular}
\end{table}

\paragraph{Randomized Kaczmarz.}
A number of improvements to the standard RK algorithm have been proposed.
Liu and Wright \citep{liu2015accelerated} introduce an accelerated randomized Kaczmarz (ARK) method which, through the use of Nesterov's acceleration, can achieve a faster rate of convergence compared to RK. 
However, their rate is still sub-optimal compared to the rate attained by \ref{heavyball}.
Moreover, ARK is less able to take advantage of potential sparsity in the data matrix $\vec{A}$ than the standard RK algorithm and \ref{stoc_heavyball}.
This issue is partially addressed by a special implementation of ARK for sparse matrices, but is still of concern for particularly sparse matrices.

Minibatching in the setting of the randomized Kaczmarz has been studied extensively in ``small'' batch regimes \citep{needell2014paved,needell2016batched,moorman2020randomized}.  
These works view minibatching as a way to reduce the variance of iterates and improve on the standard RK algorithm.
In general, the bounds for the convergence rates for such algorithms are complicated, but can improve on the convergence rate of RK by up to a factor of $B$ in the best case.
This ``best case'' improvement, however, can only be attained for small $B$; indeed, RK reduces to standard gradient descent in the deterministic gradient limit.
In contrast to these works, we study minibatching in the randomized Kaczmarz method as a \emph{necessary} algorithmic structure for unlocking the fast convergence rate of HBM.

\paragraph{Minibatch-HBM.}
Several recent works provide theoretical convergence guarantees for \ref{stoc_heavyball}. Loizou and Richt{\'a}rik \citep{loizou2017linearly, loizou2020momentum} show that \ref{stoc_heavyball} can achieve a linear rate of convergence for solving convex linear regression problems. However, the linear rate they show is slower than the rate of (deterministic)  \ref{heavyball}  in the same setting.
Gitman et al. \citep{gitman2019understanding} establish local convergence guarantees for the \ref{stoc_heavyball} method for general strongly convex functions and appropriate choice of the parameters $\alpha_k$ and $\beta_k$. 
Liu et al. \citep{liu2020improved} show that \ref{stoc_heavyball} converges as fast as SGD for smooth strongly convex and nonconvex functions.
Under the assumption that the stochastic gradients have uniformly bounded variance, Can et al. \citep{can2019accelerated} provide a number of convergence guarantees for stochastic HBM. 
In particular, it is shown that the same fast rate of convergence can be attained for full-batch quadratic objectives with bounded additive noise.  The results of Can et al. \citep{can2019accelerated} however do not apply to the setting of randomized Kaczmarz, where the variance of the stochastic gradients necessarily grows proportionally to the squared norm of the full gradient. 

Jain et al. \citep{jain2018accelerating} demonstrate that \ref{stoc_heavyball} with a batch size of $B=1$ provably fails to achieve faster convergence than SGD. They acknowledged that the favorable empirical results of \ref{stoc_heavyball}, such as those found in \citep{sutskever2013importance}, should be seen as an ``artifact" of large minibatching, where the variance of stochastic gradients is sufficiently reduced that the deterministic convergence behavior of \ref{heavyball} dominates. \emph{In this paper, we aim to precisely quantify this observation by providing a characterization of the minimal batch size required for \ref{stoc_heavyball} to achieve fast linear convergence comparable to that of \ref{heavyball}.}

In concurrent work, Lee et al. \citep{lee2022trajectory} analyze the dynamics of \ref{stoc_heavyball} applied to quadratic objectives corresponding to a general class of random data matrices. 
Their results show that when the batch size is sufficiently large, \ref{stoc_heavyball} converges like its deterministic counterpart but convergence is necessarily slower for smaller batch sizes.
The batch size requirement of \citep{lee2022trajectory} is a factor of $\kappa$ better than what we obtain (see \cref{thm:B_order_informal}). 
However, while our analysis makes no assumptions on $\vec{A}$, \citep{lee2022trajectory} requires certain invariance assumptions on the singular vectors of $\vec{A}$.
It would be interesting to understand whether the extra factor of $\kappa$ in our bound can be improved or whether is a necessary artifact of the lack of assumptions on $\vec{A}$.

\paragraph{Stochastic Nesterov's Accelerated Gradient (SNAG).}
Several recent works have analyzed the theoretical convergence properties of stochastic Nesterov's accelerated gradient (SNAG) methods and their variants in both strongly convex and nonconvex settings \citep{aybat2020robust, can2019accelerated,ghadimi2016accelerated,vaswani2019fast}.
Aybat et al. \citep{aybat2020robust} and Can et al. \citep{can2019accelerated} demonstrate accelerated convergence guarantees of SNAG method variants to a neighborhood of the solution for problems with uniformly bounded noise. Additionally, Ghadimi and Lan \citep{ghadimi2016accelerated} provide convergence guarantees for SNAG variants in nonconvex settings.
Vaswani et al. \citep{vaswani2019fast} show that SNAG methods achieve accelerated convergence rates to the solution for over-parameterized machine learning models under the assumption of the strong gradient growth condition, where the $\ell_2$ norm of the stochastic gradients is assumed to be bounded by the norm of the gradient. 
Our contributions imply that the strong gradient growth conditions hold in the consistent under-parameterized least squares setting for the stochastic minibatch gradient, and demonstrating that this condition implies acceleration for \ref{stoc_heavyball}, in addition to Nesterov momentum. 
Acceleration techniques have also been integrated with the variance reduction techniques to achieve optimal convergence rate guarantees for finite-sum problems \citep{allen2018katyusha, lin2018catalyst, defazio2016simple, frostig2015regularizing, zhou2019direct}. 

Ma et al. \citep{pmlr-v80-ma18a} established critical batch size for SGD to retain the convergence rate of deterministic gradient descent method where as in this work we establish the critical batch size for \ref{stoc_heavyball} to retain the convergence properties of \ref{heavyball}.

\subsection{Notation}
We denote vectors using lower case roman letters and matrices using upper case roman letters. 
We use $\vec{x} \in \R^d$ to denote the variables of the optimization problem and $\vec{x}^*$ to denote the minimizer of $f(\vec{x})$. 
We use $\|\cdot\|$ to represent the Euclidean norm for vectors and operator norm for matrices and $\|\cdot\|_\F$ to represent the matrix Frobenious norm.
Throughout, $\vec{A}$ will be a $n\times d$ matrix and $\vec{b}$ a length $n$ vector.
The eigenvalues of $\ATA$ are denoted $\lambda_1,\lambda_2, \ldots, \lambda_d$, and we write $\lmax$, $\lmin$, and $\lave$ for the largest, smallest, and average eigenvalue of $\ATA$.
The regular and smoothed condition numbers $\kappa$ and $\bar{\kappa}$ of $\ATA$ are respectively defined as $\kappa = \lmax/\lmin$ and $\bar{\kappa} = \lave/\lmin$. 
All logarithms are natural, and we denote complex numbers by i and Euler's number $2.718\ldots$ by $\mathrm{e}$.

\section{Preliminaries}
In this section we provide an overview of \ref{heavyball} analysis and important statements from random matrix theory that are used in proving the convergence of \ref{stoc_heavyball}.
\subsection{Standard analysis of heavy ball momentum for quadratics}
\label{sec:HBM}

We review the standard analysis for heavy ball momentum (\ref{heavyball}) in the setting of strongly convex quadratic optimization problems \cref{eqn:lls} (see \citep{recht2012lyapunov}).
Here and henceforth, we take $\alpha_k = \alpha$ and $\beta_k = \beta$ as constants.

First, we re-write the  \ref{heavyball}  updates as
\begin{equation}
\label{eqn:HBM}
    \vec{x}_{k+1} = \vec{x}_k - \alpha (\ATA \vec{x}_{k} - \vec{A}^\T\vec{b}) + \beta (\vec{x}_k - \vec{x}_{k-1})
\end{equation}
so, by definition, the  \ref{heavyball}  updates satisfy
\begin{align}
\vec{x}_{k+1}- \vec{x}^{*} &= \vec{x}_{k} - \alpha (\ATA \vec{x}_{k} - \ATA \vec{x}^{*}) + \beta (\vec{x}_{k} -\vec{x}_{k-1}) - \vec{x}^{*} \nonumber \\
&= \vec{x}_{k} - \vec{x}^{*} - \alpha \ATA (\vec{x}_{k} - \vec{x}^{*}) + \beta (\vec{x}_{k} -\vec{x}_{k-1}) \nonumber \\
&= (\vec{I} - \alpha \ATA) (\vec{x}_{k} - \vec{x}^{*}) + \beta (\vec{x}_{k} -\vec{x}_{k-1} - \vec{x}^{*} + \vec{x}^{*}) \nonumber \\
&= \big( (1 + \beta) \vec{I} - \alpha \ATA \big) ( \vec{x}_{k} - \vec{x}^{*}) - \beta (\vec{x}_{k-1} - \vec{x}^{*}). \nonumber
\end{align}
This can be written more concisely as
\begin{equation}
\label{define_T}
\begin{bmatrix}\vec{x}_{k+1} - \vec{x}^{*} \\ \vec{x}_{k} - \vec{x}^{*} \end{bmatrix} 
=  \underbrace{ \begin{bmatrix} (1+ \beta ) \vec{I} - \alpha \ATA & - \beta \vec{I} \\
\vec{I} & \vec{0} \end{bmatrix}}_{\vec{T} = \vec{T}(\alpha, \beta) } \begin{bmatrix} \vec{x}_{k} - \vec{x}^{*} \\\vec{x}_{k-1} - \vec{x}^{*} \end{bmatrix},
\end{equation}
where $\vec{T}$ is the transition matrix taking us from the error vectors at steps $k$ and $k-1$ to the error vectors at steps $k+1$ and $k$.
Repeatedly applying this relation we find
\begin{equation}
\begin{bmatrix}\vec{x}_{k+1} - \vec{x}^{*} \\ \vec{x}_{k} - \vec{x}^{*} \end{bmatrix} 
=  \vec{T}^k \begin{bmatrix} \vec{x}_{1} - \vec{x}^{*} \\\vec{x}_{0} - \vec{x}^{*} \end{bmatrix}
\end{equation}
from which we obtain the error bound
\begin{equation*}
\left\| \begin{bmatrix}\vec{x}_{k+1} - \vec{x}^{*} \\ \vec{x}_{k} - \vec{x}^{*} \end{bmatrix} \right\| 
\leq \|  \vec{T}^k \| \left\| \begin{bmatrix} \vec{x}_1 - \vec{x}^{*} \\ \vec{x}_{0} - \vec{x}^{*} \end{bmatrix} \right\|.
\end{equation*}
The assumption $\vec{x}_1 = \vec{x}_0$ allows us to write
\begin{equation*}
    {\left\| \vec{x}_{k+1} - \vec{x}^{*}  \right\|}
    \leq \sqrt{2} \|  \vec{T}^k \| {\|\vec{x}_{0} - \vec{x}^{*}\|}.
\end{equation*}

The difficulty in analyzing \ref{heavyball} compared to plain gradient descent (when $\beta = 0$) lies in the fact $\|\vec{T}\|^k$ need not provide a useful upper bound for $\|\vec{T}^k\|$.
Indeed, while $\|\vec{T}^k\| = \| \vec{T}\|^k$ for symmetric matrices, this is not necessarily the case for non-symmetric matrices.
To get around this issue, it is common to bound the spectral radius $\rho(\vec{T}) = \max_j \{ | \lambda_j | \}$ and use Gelfand's formula $\rho(\vec{T}) = \lim_{k \rightarrow \infty} \| \vec{T}^k \|^{1/k}$ to derive the rate of convergence \citep{recht2012lyapunov}.

To bound the spectral radius of $\vec{T}$, note that $\vec{T}$ is orthogonally similar to a block diagonal matrix consisting of $2 \times 2$ components $\{\vec{T}_j\}$. 
Specifically, 
\begin{equation}
\label{block_diag}
\vec{U}^{-1} \vec{T} \vec{U} =  \begin{bmatrix} \vec{T}_1 & & \\ & \vec{T}_2 & & \\ & & \ddots\\ & & & \vec{T}_n \end{bmatrix}, \quad \text{where}  \quad\vec{T}_j = \begin{bmatrix} 1 + \beta - \alpha \lambda_j & - \beta \\ 1 & 0 \end{bmatrix}
\end{equation}
for each $j=1,2,\ldots, n$,  $\vec{U}$ is a certain orthogonal matrix (see \citep{recht2012lyapunov}), and $\{ \lambda_j \}$ are the eigenvalues of $\ATA$. 
For each $j=1,2,\ldots, n$, the eigenvalues $z_j^{\pm}$ of $\vec{T}_j$ are easily computed to be
\begin{equation*}
    z_j^\pm := \frac{1}{2} \left( 1+ \beta - \alpha \lambda_j  \pm \sqrt{(1+ \beta - \alpha \lambda_j)^2-4\beta} \right)
\end{equation*}
and are therefore non-real if and only if
\begin{equation}
    \label{complex_eigs}
    (1+ \beta - \alpha \lambda_j)^2-4\beta < 0.
\end{equation}
In this case, the magnitude of both the eigenvalues of $\vec{T}_j$ is 
\begin{equation*}
    |z_j^\pm| 
    = \frac{1}{2} \sqrt{ (1+ \beta - \alpha \lambda_j)^2  + |(1+ \beta - \alpha \lambda_j)^2-4\beta | }
    = \sqrt{\beta}.
\end{equation*}
Here we have used that $|(1+ \beta - \alpha \lambda_j)^2-4\beta | = 4\beta - (1+ \beta - \alpha \lambda_j)^2$ whenever \cref{complex_eigs} holds.
Therefore, provided
\begin{equation}
    \label{complex_eigs_all}
    (1+ \beta - \alpha \lambda_j)^2-4\beta < 0 \quad \mbox{for all} \quad j= 1,2,\ldots,n,
\end{equation}
we have that
\begin{equation}
    \label{spec_radius}
    \rho(\vec{T}) = \max \{ |z_{j}^\pm| : j=1,\ldots, d \} = \sqrt{\beta}.
\end{equation}

We would like to choose $\sqrt{\beta} = \rho(\vec{T})$ as small as possible subject to the condition that \cref{complex_eigs_all} holds.\footnote{If \cref{complex_eigs_all} does not hold, then the larger of $|z_j^\pm|$ will be greater than $\sqrt{\beta}$.}
Note that \cref{complex_eigs_all} is equivalent to the condition
\begin{equation*}
    \frac{(1-\sqrt{\beta})^2}{\lambda_j}
    < \alpha < 
    \frac{(1+\sqrt{\beta})^2}{\lambda_j}
    \quad \mbox{for all} \quad j= 1,2,\ldots,n,
\end{equation*}
which we can rewrite as
\begin{equation}
\label{eqn:alpha_beta_cond}
    \frac{(1-\sqrt{\beta})^2}{\lmin}
    < \alpha < 
    \frac{(1+\sqrt{\beta})^2}{\lmax}.
\end{equation}
Minimizing in $\beta$ gives the condition ${(1-\sqrt{\beta})^2}/{\lmin} = \alpha = {(1+\sqrt{\beta})^2}/{\lmax}$ from which we determine
\begin{equation} \label{eqn:alphabeta}
    \sqrt{\alpha^*} = \frac{2}{\sqrt{\lmax}+\sqrt{\lmin}}
    \quad \text{and} \quad
    \sqrt{\beta^*} = \frac{\alpha (\lmax - \lmin)}{4} 
    = \frac{\sqrt{\kappa}-1}{\sqrt{\kappa}+1}.
\end{equation}
As noted at the start of this section, this gives an asymptotic rate of convergence $\sqrt{\beta}$.

\subsection{A closer look at the quadratic case}

To derive a bound for stochastic \ref{heavyball} at finite $k$, it is desired to understand the eigendecomposition of the matrix $\vec{T}$ from \cref{block_diag} more carefully.
Thus, we might aim to diagonalize $\vec{T}$ as
\begin{equation}
\label{TtoC}
\vec{T} = \vec{U}\vec{C} \vec{D}\vec{C}^{-1}\vec{U}^{-1} = (\vec{U}\vec{C}) \vec{D} (\vec{U}\vec{C})^{-1},
\end{equation}
where $\vec{U}$ is the previously described orthogonal matrix rotating $\vec{T}$ into block diagonal form \cref{block_diag} and $\vec{D}$ and $\vec{C}$ are block-diagonal matrices with $2 \times 2$ blocks $\{\vec{D}_j\}$ and $\{\vec{C}_j\}$ where, for each $j=1,\ldots, d$,  $\vec{T}_j$ is diagonalized as $\vec{T}_j = \vec{C}_j \vec{D}_j \vec{C}_j^{-1}$.
Given such a factorization, we would have $\vec{T}^k = (\vec{U}\vec{C}) \vec{D}^k (\vec{U}\vec{C})^{-1}$. Then, since that $\vec{U}$ is unitary, we would obtain the bound
\begin{equation*}
    \|\vec{T}^k\| \leq \condCbd(\alpha,\beta) \|\vec{D}\|^k = \condCbd(\alpha,\beta) (\sqrt{\beta})^{k},
\end{equation*} 
where $\condCbd(\alpha,\beta) = \|\vec{C}\|\|\vec{C}^{-1}\| = \|(\vec{U}\vec{C})\|\|(\vec{U}\vec{C})^{-1}\|$ is the condition number of the eigenvector matrix $\vec{U}\vec{C}$.

However, if $\alpha$ and $\beta$ are chosen as in \cref{eqn:alphabeta}, $\vec{T}$ is defective (that is, does not have a complete basis of eigenvectors) and no such diagonalization exists.\footnote{In particular, the blocks $\vec{T}_j$ corresponding to the smallest and largest eigenvalues each have only a single eigenvector.}
To avoid this issue, we perturb the choices of $\alpha$ and $\beta$, and define, for some $\gamma \in (0,\lmin)$,
\begin{align}\label{eq:params_L}
    L  = \lmax + \gamma
    \quad \text{and} \quad
    \ell = \lmin- \gamma.
\end{align}
Taking
\begin{equation}\label{eq:params_alpha}
    \sqrt{\alpha} = 
    \frac{2}{\sqrt{L}+\sqrt{\ell}}
    \quad \text{and} \quad
    \sqrt{\beta} 
    = \frac{\alpha (L - \ell)}{4} 
    = \frac{\sqrt{L/\ell}-1}{\sqrt{L/\ell}+1}
\end{equation}
ensures that \cref{complex_eigs_all} holds and that $\vec{T}$ is diagonalizable.
Indeed, since we can write $z_j^\pm = a_j \pm i b_j$ for $a_j, b_j \in \R$ with $b_j\neq 0$, it is easily verified that the (up to a scaling of the eigenvectors) eigendecomposition $\vec{T}_j \vec{C}_j = \vec{C}_j \vec{D}_j$ for $\vec{T}_j$ is
\begin{equation}\label{eqn:eigdecomp}
    \vec{T}_j \begin{bmatrix}
    a_j +i b_j & a_j-i b_j
    \\
    1&1
    \end{bmatrix}
    = 
    \begin{bmatrix}
    a_j +i b_j & a_j-i b_j
    \\
    1&1
    \end{bmatrix}
    \begin{bmatrix}
    a_j + i b_j \\ & a_j - i b_j
    \end{bmatrix}.
\end{equation}
We clearly have $\| \vec{D} \| = \max_j | z_j^\pm | = \sqrt{\beta}$, so the \ref{heavyball} iterates satisfy the convergence guarantee 
\begin{equation}
    \label{eqn:heavyball_bd}
    {\left\|\vec{x}_{k+1} - \vec{x}^{*} \right\|}
    \leq \sqrt{2} \condCbd(\alpha,\beta) \left( \frac{\sqrt{L/\ell} - 1}{\sqrt{L/\ell} + 1} \right)^{k} {\left\|  \vec{x}_0 - \vec{x}^{*} \right\|}.
\end{equation}

Note that $\condCbd(\alpha,\beta)$, $L$, and $\ell$ each depend on $\lmax$, $\lmin$, and $\gamma$.
The dependency of $\condCbd(\alpha,\beta)$ on these values is through $\vec{C}$, which, up to a unitary scaling by $\vec{U}$, is the eigenvector matrix of the transition matrix $\vec{T}$.
We can bound $\condCbd(\alpha,\beta) $ by the following lemma.
\begin{lemma}
\label{thm:HBM_condno}
For any $\gamma \in (0,\lmin)$ set $\ell = \lmin-\gamma$ and $L =\lmax+\gamma$ and choose $\alpha = {4}/{(\sqrt{L} + \sqrt{\ell})^2}$ and $\sqrt{\beta} = \alpha(L-\ell)/4$.
Let $\condCbd(\alpha,\beta)  = \| \vec{U}\vec{C} \| \| (\vec{U}\vec{C})^{-1} \|$, where $\vec{U}\vec{C}$ is the eigenvector matrix for $\vec{T}$.
Then
\begin{equation*}
    \condCbd(\alpha,\beta) \leq \frac{4}{\alpha \sqrt{\gamma (\gamma+\lmax - \lmin)}}.
\end{equation*}
\end{lemma}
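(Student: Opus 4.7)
The plan is to reduce the bound on $M(\alpha,\beta)$ to an eigenvector condition-number estimate of the $2\times 2$ blocks $\vec{T}_j$ and then exploit the explicit formula in \cref{eqn:eigdecomp}.

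Since $\vec{U}$ is orthogonal, we have $\| \vec{U}\vec{C} \| = \| \vec{C} \|$ and $\| (\vec{U}\vec{C})^{-1} \| = \| \vec{C}^{-1} \|$, so it suffices to bound $\|\vec{C}\|$ and $\|\vec{C}^{-1}\|$. The matrix $\vec{C}$ is block diagonal with blocks $\vec{C}_j$, so $\|\vec{C}\| = \max_j \|\vec{C}_j\|$ and $\|\vec{C}^{-1}\| = \max_j \|\vec{C}_j^{-1}\|$. Writing $z_j^\pm = a_j \pm \mathrm{i} b_j$ with $a_j^2+b_j^2 = \beta$ (since $|z_j^\pm|^2 = \beta$ under \cref{complex_eigs_all}), the Frobenius bound gives
\begin{equation*}
\|\vec{C}_j\|^2 \;\leq\; \|\vec{C}_j\|_\F^2 \;=\; |a_j+\mathrm{i} b_j|^2 + |a_j-\mathrm{i} b_j|^2 + 2 \;=\; 2(1+\beta) \;\leq\; 4,
\end{equation*}
since the chosen $\beta < 1$. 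Hence $\|\vec{C}\| \leq 2$.

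For the inverse, I use the $2\times 2$ identity $\|\vec{C}_j^{-1}\| = \|\vec{C}_j\| / |\det \vec{C}_j|$, which follows from $\sigma_{\max}(\vec{C}_j)\,\sigma_{\min}(\vec{C}_j) = |\det \vec{C}_j|$. A direct computation from \cref{eqn:eigdecomp} gives $\det \vec{C}_j = 2\mathrm{i} b_j$, and from the definition of $b_j$,
\begin{equation*}
4 b_j^2 \;=\; 4\beta - (1+\beta-\alpha\lambda_j)^2 \;=\; \bigl(\alpha\lambda_j - (1-\sqrt{\beta})^2\bigr)\bigl((1+\sqrt{\beta})^2 - \alpha\lambda_j\bigr).
\end{equation*}
Substituting the parameter choices $(1-\sqrt{\beta})^2 = \alpha\ell$ and $(1+\sqrt{\beta})^2 = \alpha L$ from \cref{eq:params_alpha}, this simplifies to $4b_j^2 = \alpha^2 (\lambda_j - \ell)(L - \lambda_j)$.

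The last step is to minimize $(\lambda_j-\ell)(L-\lambda_j)$ over $\lambda_j \in [\lmin,\lmax]$. This is a downward-opening quadratic in $\lambda_j$, so its minimum on $[\lmin,\lmax]$ is attained at an endpoint. Both endpoints give $(\lambda_j-\ell)(L-\lambda_j) = \gamma(\gamma + \lmax - \lmin)$, hence $\min_j 2|b_j| = \alpha\sqrt{\gamma(\gamma+\lmax-\lmin)}$. Combining with $\|\vec{C}_j\| \leq 2$ gives
\begin{equation*}
\|\vec{C}^{-1}\| \;\leq\; \max_j \frac{\|\vec{C}_j\|}{2|b_j|} \;\leq\; \frac{2}{\alpha\sqrt{\gamma(\gamma+\lmax-\lmin)}},
\end{equation*}
and multiplying by $\|\vec{C}\| \leq 2$ yields the claimed bound. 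The only place care is needed is the $2\times 2$ inversion formula for the spectral norm and tracking that the minimum of the product $(\lambda_j-\ell)(L-\lambda_j)$ is genuinely attained at the extreme eigenvalues $\lmin,\lmax$ rather than somewhere in the interior; both are routine once one writes out the algebra.
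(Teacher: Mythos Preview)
Your proof is correct and follows essentially the same approach as the paper: reduce to the $2\times 2$ blocks, bound $\|\vec{C}_j\|$ via the Frobenius norm, and bound $\|\vec{C}_j^{-1}\|$ by controlling $|b_j|$ from below through the factorization $4b_j^2 = \alpha^2(\lambda_j-\ell)(L-\lambda_j)$ and minimizing over $\lambda_j\in[\lmin,\lmax]$. The only cosmetic difference is that the paper inverts $\vec{C}_j$ explicitly and bounds $\|\vec{C}_j^{-1}\|_\F$, whereas you use the equivalent $2\times 2$ identity $\|\vec{C}_j^{-1}\|=\|\vec{C}_j\|/|\det\vec{C}_j|$; both routes land on the same lower bound for $2|b_j|$.
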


\begin{proof}
In order to bound $\condCbd(\alpha,\beta) $, we note that the block diagonal structure of $\vec{C}$ implies that $\|\vec{C}\| = \max \{ \|\vec{C}_j\| : j=1,\ldots, d \}$ and $\| \vec{C}^{-1}\| = \max \{ \|\vec{C}_j^{-1}\| : j=1,\ldots, d \}$.

By construction, the specified values of $\alpha$ and $\beta$ ensure that $(1+\beta - \alpha\lambda_i)^2 < 4\beta$ for all $i=1,2\dots,d$; i.e. condition \cref{complex_eigs}.
This implies $|z_j^\pm| = \sqrt{\beta}$, so we easily compute
\begin{equation*}
    \|\vec{C}_j\|^2 \leq \|\vec{C}_j\|_\F^2 
    = |z_j^+|^2+|z_j^-|^2 +1+1 
    = 2\beta + 2 \leq 4.
\end{equation*}
By direct computation we find
\begin{equation*}
    \vec{C}_j^{-1} =
    \begin{bmatrix}
    a_j +i b_j & a_j-i b_j \\
    1&1
    \end{bmatrix}^{-1} =
    \frac{1}{2i b_j}
    \begin{bmatrix}
    1 & -a_j + i b_j \\
    -1 & a_j + i b_j 
    \end{bmatrix}.
\end{equation*}
To bound $\|\vec{C}_j^{-1}\|$ we first note that the condition \cref{complex_eigs} is also equivalent to
\begin{equation*}
    \ell = \frac{(1-\sqrt{\beta})^2}{\alpha}
    < \lambda_j <
    \frac{(1+\sqrt{\beta})^2}{\alpha} = L
\end{equation*}
which implies that
\begin{equation*}
4 \beta - (1+ \beta - \alpha \lambda_j)^2
= \alpha^2 (\lambda_j - \ell)(L-\lambda_j)
\geq \alpha^2 \gamma (L-\lmin).
\end{equation*}
Since $L \geq \lmax$ we therefore have the bound
\begin{equation*}
    \|\vec{C}_j^{-1} \|^2 \leq \|\vec{C}_j^{-1} \|_\F^2 
    = \frac{\|\vec{C}_j\|_\F^2}{4 |b_j|^2}
    = \frac{2 (1 + \beta)}{4\beta - (1+\beta - \alpha \lambda_j)^2}
    \leq \frac{4}{\alpha^2 \gamma (L - \lmin)}.
\end{equation*}
The result follows by combining the above expressions.
\end{proof}
\subsection{Lemmas from non-asymptotic random matrix theory}

Before we prove our main result, we need to introduce tools from non-asymptotic random matrix theory which are crucial components of the proof.

\begin{proposition}
\label{thm:matrixmoment}
Consider a finite sequence $\{ \vec{W}_k \}$ of independent random matrices with common dimension $d_1 \times d_2$.
Assume that
\begin{equation*}
    \E[ \vec{W}_i ] = \vec{0} \quad \quad \text{and} \quad \quad \| \vec{W}_i \| \leq W \quad \text{for each index } i
\end{equation*}
and introduce the random matrix 
\begin{equation*}
    \vec{Z} = \vec{W}_1 + \cdots + \vec{W}_k.
\end{equation*}
Let $v(\vec{Z})$ be the matrix variance statistic of the sum: 
\begin{align*}
v(\vec{Z}) 
&= \max\big\{ \textstyle \big\| \sum_i \E[ \vec{W}_i \vec{W}_i^{\T} ] \big\|, \big\| \sum_i \E[ \vec{W}_i^{\T} \vec{W}_i ] \big\|  \big\}.
\end{align*}
Then, 
\begin{align*}
    \E\big[ \| \vec{Z} \| \big] &\leq \sqrt{2 v(\vec{Z}) \log(d_1+d_2)} + \frac{1}{3} W \log(d_1+d_2),
    \\
    \sqrt{\E[ \| \vec{Z} \|^2 ]} &\leq \sqrt{2 \mathrm{e} v(\vec{Z}) \log(d_1+d_2)} + 4 \mathrm{e} W \log(d_1+d_2).
\end{align*}
\end{proposition}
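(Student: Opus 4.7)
The plan is to recognize this proposition as a moment form of Tropp's Matrix Bernstein inequality, and derive it via the matrix Laplace transform method. The first step is a Hermitian dilation: set $\vec{H}(\vec{W}_i) = \begin{bmatrix} \vec{0} & \vec{W}_i \\ \vec{W}_i^\T & \vec{0} \end{bmatrix}$, which gives an independent, centered, $(d_1+d_2) \times (d_1+d_2)$ Hermitian sequence with $\|\vec{H}(\vec{W}_i)\| \leq W$. Two elementary facts make this reduction useful: $\lambda_{\max}\!\big(\sum_i \vec{H}(\vec{W}_i)\big) = \|\vec{Z}\|$, and $\sum_i \E[\vec{H}(\vec{W}_i)^2]$ is block-diagonal with $\sum_i \E[\vec{W}_i\vec{W}_i^\T]$ and $\sum_i \E[\vec{W}_i^\T\vec{W}_i]$ on the diagonal, so its operator norm is exactly $v(\vec{Z})$. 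Everything therefore reduces to bounding the largest eigenvalue of a sum of bounded centered Hermitian matrices.

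Next I would apply the master Laplace transform bound $\Pr[\lambda_{\max}(\vec{Y}) \geq t] \leq \inf_{\theta > 0} e^{-\theta t}\,\E\,\mathrm{tr}\, e^{\theta \vec{Y}}$ and use Lieb's concavity theorem together with the tower property of conditional expectation to obtain $\E\,\mathrm{tr}\, e^{\theta \vec{Y}} \leq \mathrm{tr}\,\exp\!\big(\sum_i \log \E\, e^{\theta \vec{H}(\vec{W}_i)}\big)$. Combined with the standard Bernstein-type semidefinite MGF bound $\log \E\, e^{\theta \vec{H}(\vec{W}_i)} \preceq \frac{\theta^2/2}{1 - W\theta/3}\E[\vec{H}(\vec{W}_i)^2]$ valid on $0 < \theta < 3/W$, optimizing in $\theta$ yields the Matrix Bernstein tail inequality $\Pr[\|\vec{Z}\| \geq t] \leq (d_1+d_2)\exp\!\big(-\frac{t^2/2}{v(\vec{Z}) + Wt/3}\big)$.

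The last step is to convert the tail bound into the stated first- and second-moment bounds via the layer-cake identity $\E[\|\vec{Z}\|^p] = \int_0^\infty p t^{p-1} \Pr[\|\vec{Z}\| \geq t]\,\mathrm{d}t$ for $p \in \{1,2\}$, taking a square root afterwards in the $p=2$ case. I would split each integral at a threshold $t_0^{(p)}$ slightly past the value at which the $(d_1+d_2)$ prefactor in the tail bound is absorbed into the exponential — heuristically $\sqrt{v(\vec{Z}) \log(d_1+d_2)} + W\log(d_1+d_2)$ — bound the $[0,t_0^{(p)}]$ portion by $(t_0^{(p)})^p$, and reduce the remaining tail contribution to standard Gaussian- and exponential-type integrals using the subadditivity $t^2/(v + Wt/3) \geq \min\{t^2/(2v), 3t/(2W)\}$.

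The main conceptual ingredient is Lieb's concavity theorem, which is what allows the trace MGF of the matrix sum to be decoupled over summands when the $\vec{W}_i$ do not commute; I would import this as a black box. The principal bookkeeping obstacle is the precise constant tracking in the tail-to-moment integration, especially the factor of $\mathrm{e}$ appearing in the $p=2$ bound, which arises from the non-monotone geometry of the split-integral estimate and requires one to absorb a multiplicative overshoot at the threshold. Because the statement is a clean restatement of a well-documented concentration inequality, it is also defensible simply to cite it from Tropp's monograph on matrix concentration rather than to reproduce the derivation in full.
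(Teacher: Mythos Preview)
The paper does not prove this proposition at all; it simply cites Theorem~6.1.1 of Tropp's monograph for the first-moment bound and equation~6.1.6 of the same monograph (itself obtained by applying Theorem~A.1 of Chen et al.\ to the Hermitian dilation of $\vec{Z}$) for the second-moment bound, together with the trivial observation $\sqrt{\E[\max_i\|\vec{W}_i\|^2]}\leq W$. Your closing remark---that one may simply cite Tropp---is therefore exactly what the paper does.

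Your detailed sketch is essentially correct for the first-moment bound: Hermitian dilation, the matrix Laplace/Lieb argument, the Bernstein MGF estimate, and tail integration is precisely how Tropp obtains the inequality $\E\|\vec{Z}\|\leq\sqrt{2v(\vec{Z})\log(d_1+d_2)}+\tfrac{1}{3}W\log(d_1+d_2)$. For the second-moment bound, however, the constants $\sqrt{2\mathrm{e}}$ and $4\mathrm{e}$ quoted in the statement are \emph{not} produced by integrating the Bernstein tail. They come from a matrix Rosenthal--Pinelis moment inequality (Chen et al., Theorem~A.1) that bounds $(\E\|\vec{Z}\|^p)^{1/p}$ directly in terms of $v(\vec{Z})$ and $(\E\max_i\|\vec{W}_i\|^p)^{1/p}$ without passing through a tail bound, and then specializes to $p=2$. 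Your layer-cake route for $p=2$ would still give a bound of the correct shape, but recovering those exact constants that way is awkward---the ``bookkeeping obstacle'' you anticipate is genuine, and the clean fix is the citation you already suggest.
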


The bound on $\E[\|\vec{Z}\|]$ is Theorem 6.1.1 in \citep{tropp2015introduction}, and the bound on $\sqrt{ \E[ \| \vec{Z} \|^2 ]}$ follows from equation 6.1.6 in  \citep{tropp2015introduction} and the fact $\sqrt{\E[ \max_i \| \vec{W}_i \|^2]} \leq W$.
Equation 6.1.6 in \citep{tropp2015introduction} comes from applying Theorem A.1 in \citep{chen2012masked} to the Hermitian dilation  of $\vec{Z}$. 
Under the stated conditions, the logarithmic dependence on the dimension is necessary \citep{tropp2015introduction}.

We will also use a theorem on products of random matrices from \citep{huang2021matrix}:
\begin{proposition}[Corollary 5.4 in \citep{huang2021matrix}]
\label[proposition]{prop:products}
Consider an independent sequence of $d \times d$ random matrices $\vec{X}_1, \dots, \vec{X}_{k}$, and form the product
\begin{equation*}
    \vec{Z} =\vec{X}_{k}\vec{X}_{k-1} \cdots \vec{X}_1.
\end{equation*}
Assume $\| \mathbb{E} [\vec{X}_i] \| \leq q_i$ and $ \mathbb{E}[ \| \vec{X}_i - \mathbb{E} \vec{X}_i \|^2] ^{1/2} \leq \sigma_i q_i$ for $i = 1, \ldots, k$.
Let $Q = \prod_{i=1}^n q_i$ and $v = \sum_{i=1}^k \sigma_i^2$.  Then
\begin{equation*}
    \mathbb{E} \big[\| \vec{Z} \|\big] \leq Q \exp{ \left( \sqrt{2v \max\{2v, \log(d)\}} \right)} .
\end{equation*}

\end{proposition}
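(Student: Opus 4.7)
The plan is to adapt the uniform-smoothness induction that underlies matrix concentration for independent sums to the multiplicative setting: control a moderate Schatten $p$-norm of the partial products by a one-step recursion, and then pass to the operator norm by optimizing in $p$.

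First, I would decompose each factor as $\vec{X}_i = \vec{M}_i + \vec{Y}_i$, where $\vec{M}_i := \E[\vec{X}_i]$ is deterministic with $\|\vec{M}_i\| \le q_i$, and $\vec{Y}_i := \vec{X}_i - \vec{M}_i$ is mean zero with $\E\|\vec{Y}_i\|^2 \le \sigma_i^2 q_i^2$. Setting $\vec{Z}_m := \vec{X}_m \cdots \vec{X}_1$, one has the one-step identity $\vec{Z}_m = \vec{M}_m \vec{Z}_{m-1} + \vec{Y}_m \vec{Z}_{m-1}$, in which $\vec{Y}_m$ is independent of $\vec{Z}_{m-1}$ and conditionally mean zero.

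The workhorse is a noncommutative uniform-smoothness inequality of Ball--Carlen--Lieb / Rosenthal type for the Schatten $p$-norm $\|\cdot\|_{S_p}$: for $p \ge 2$, if $\vec{B}$ is mean-zero and independent of $\vec{A}$, then
\[
\E\|\vec{A}+\vec{B}\|_{S_p}^2 \le \E\|\vec{A}\|_{S_p}^2 + (p-1)\,\E\|\vec{B}\|_{S_p}^2.
\]
Applying this conditionally on $\vec{Z}_{m-1}$ with $\vec{A} = \vec{M}_m \vec{Z}_{m-1}$ and $\vec{B} = \vec{Y}_m \vec{Z}_{m-1}$, together with submultiplicativity $\|\vec{M}_m \vec{Z}_{m-1}\|_{S_p} \le q_m \|\vec{Z}_{m-1}\|_{S_p}$ and $\|\vec{Y}_m \vec{Z}_{m-1}\|_{S_p} \le \|\vec{Y}_m\|\cdot\|\vec{Z}_{m-1}\|_{S_p}$, I would obtain the recursion
\[
\E\|\vec{Z}_m\|_{S_p}^2 \le q_m^2 \bigl(1 + (p-1)\sigma_m^2\bigr)\, \E\|\vec{Z}_{m-1}\|_{S_p}^2.
\]
Iterating over $m=1,\dots,k$ and using $1+x \le e^x$ gives $\E\|\vec{Z}_k\|_{S_p}^2 \le Q^2 \exp\bigl((p-1) v\bigr)$, hence by Jensen $\E\|\vec{Z}_k\|_{S_p} \le Q\, e^{(p-1)v/2}$.

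It remains to convert from Schatten $p$-norm to operator norm. The naive comparison $\|\vec{Z}\| \le \|\vec{Z}\|_{S_p} \le d^{1/p} \|\vec{Z}\|$ followed by optimization in $p$ would yield a bound with $\log d$ in the exponent, not $\log k$. The main obstacle, and the genuine content of Huang et al., is the intrinsic-dimension refinement that replaces the ambient $d$ by an effective dimension driven by the cumulative variance and the number of factors: loosely speaking, the product $\vec{Z}$ cannot exercise all of $\R^d$ uniformly, and the Schatten-to-operator passage can be carried out paying only a factor depending on $k$. Granting that refinement, choosing $p$ of order $\max\{2v,\log k\}+1$ balances $e^{(\log k)/p}$ against $e^{(p-1)v/2}$ and produces the stated exponent $\sqrt{2v\,\max\{2v,\log k\}}$.
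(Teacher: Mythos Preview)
The paper does not prove this proposition; it is quoted as Corollary~5.4 of \cite{huang2021matrix} and invoked as a black box. There is therefore no in-paper argument to compare against, and your sketch has to stand on its own.

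Your uniform-smoothness recursion in the Schatten $p$-norm is the right engine and is indeed the method of \cite{huang2021matrix}. The step
\[
\E\|\vec{Z}_m\|_{S_p}^2 \le q_m^2\bigl(1+(p-1)\sigma_m^2\bigr)\,\E\|\vec{Z}_{m-1}\|_{S_p}^2
\]
is sound. Note, though, that the iteration starts at $\vec{Z}_0=\vec{I}$ with $\|\vec{I}\|_{S_p}=d^{1/p}$, so the ambient dimension enters through the base case, not only through a final Schatten-to-operator comparison; after iterating you obtain $\E\|\vec{Z}_k\|_{S_p}\le Q\,d^{1/p}\,e^{(p-1)v/2}$, and optimizing in $p$ yields the exponent $\sqrt{2v\max\{2v,\log d\}}$. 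That is a complete and correct proof of the proposition \emph{with $d$ in place of $k$}.

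The passage from $\log d$ to $\log k$ is the step you explicitly ``grant,'' and this is a genuine gap: your heuristic that ``the product cannot exercise all of $\R^d$'' is not an argument, and labeling it an ``intrinsic-dimension refinement'' does not supply one. Separately, your final bookkeeping is off: taking $p\approx\max\{2v,\log k\}+1$ and plugging into $(p-1)v/2+(\log k)/p$ gives an exponent of order $v\log k$ (or $v^2$), not $\sqrt{2v\max\{2v,\log k\}}$. The balancing that actually produces the stated exponent, once one has $k$ in place of $d$, is $p\approx\sqrt{2(\log k)/v}$ when $\log k\ge 2v$ and $p=2$ otherwise. So as written the proposal cleanly proves the $\log d$ variant but leaves the stated $\log k$ bound unproved, with the closing optimization miscomputed as well.
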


\section{Main results}

We are now prepared to analyze \ref{stoc_heavyball} applied to strongly convex least squares problems of the form \cref{eqn:lls}. 
We begin by considering the case of consistent linear systems; i.e. systems for which $\vec{b}$ is in the column span of $\vec{A}$. 
In \cref{sec:inconsistent} we then provide an analogous result for inconsistent least squares problems.

We begin with a useful technical lemma which bounds the batch size required to ensure that a certain random matrix is near it's expectation.
\begin{lemma} \label{thm:sqnormbd}
Define $\vec{W}_j = B^{-1} (-p_j^{-1} \vec{a}_j\vec{a}_j^\T +\ATA)$ and let
\begin{equation*}
    \vec{W}  = \sum_{j\in S} \vec{W}_j
\end{equation*}
where $S$ is a list of $B$ indices each chosen independently according to \cref{eqn:sampling_probs}.
Then, $\sqrt{\E[\|\vec{W}\|^2]} \leq \delta$ provided
\begin{equation*}
    B \geq {8 \mathrm{e}\eta \log(2d)} \max\big\{ \| \vec{A} \|_\F^2  \|\vec{A}\|^2 \delta^{-2} , ( 4 \| \vec{A} \|_\F^4 \delta^{-2} )^{1/2} \big\}.
\end{equation*}
\end{lemma}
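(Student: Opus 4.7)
The plan is to apply the second bound of \cref{thm:matrixmoment} to the sum $\vec{W}=\sum_{j\in S}\vec{W}_j$, whose $B$ summands are i.i.d.\ $d\times d$ symmetric matrices (so $\log(d_1+d_2)=\log(2d)$). I therefore need three ingredients: the mean-zero property of each $\vec{W}_j$, a deterministic bound $W$ on $\|\vec{W}_j\|$, and a bound on the matrix variance $v(\vec{W})$. The rest is routine algebra to pick the batch size.

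For the mean, a single summand has the distribution of $\vec{Y}:=B^{-1}(-p_J^{-1}\vec{a}_J\vec{a}_J^\T+\vec{A}^\T\vec{A})$ where $\Pr[J=j]=p_j$; summing $p_j\cdot p_j^{-1}\vec{a}_j\vec{a}_j^\T$ over $j$ gives exactly $\vec{A}^\T\vec{A}$, so $\E[\vec{Y}]=\vec{0}$. For the boundedness constant, Assumption~\ref{asm:sampling_probs} gives $p_j^{-1}\|\vec{a}_j\|^2\leq\eta\|\vec{A}\|_\F^2$; combining with $\|\vec{A}\|^2\leq\|\vec{A}\|_\F^2$ and $\eta\geq 1$, the triangle inequality yields $\|\vec{W}_j\|\leq W:=2\eta B^{-1}\|\vec{A}\|_\F^2$.

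The key computation is the variance. Because each $\vec{W}_j$ is symmetric, the two sums in the definition of $v(\vec{W})$ coincide, and by independence they equal $B\,\E[\vec{Y}^2]$. Expanding and taking expectations shows
\begin{equation*}
\E[\vec{Y}^2]=B^{-2}\Big(\textstyle\sum_j p_j^{-1}\|\vec{a}_j\|^2\vec{a}_j\vec{a}_j^\T-(\vec{A}^\T\vec{A})^2\Big),
\end{equation*}
since the two cross terms each average to $(\vec{A}^\T\vec{A})^2$. Both matrices in the difference are positive semidefinite, so $\|\E[\vec{Y}^2]\|\leq B^{-2}\max\{\|\sum_j p_j^{-1}\|\vec{a}_j\|^2\vec{a}_j\vec{a}_j^\T\|,\,\|\vec{A}\|^4\}$. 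Using Assumption~\ref{asm:sampling_probs} to pull out $\max_j p_j^{-1}\|\vec{a}_j\|^2\leq\eta\|\vec{A}\|_\F^2$ and $\|\sum_j\vec{a}_j\vec{a}_j^\T\|=\|\vec{A}\|^2$, together with $\|\vec{A}\|^4\leq\eta\|\vec{A}\|_\F^2\|\vec{A}\|^2$, gives $\|\E[\vec{Y}^2]\|\leq B^{-2}\eta\|\vec{A}\|_\F^2\|\vec{A}\|^2$, hence $v(\vec{W})\leq B^{-1}\eta\|\vec{A}\|_\F^2\|\vec{A}\|^2$.

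Feeding these into the second inequality of \cref{thm:matrixmoment} gives a bound of the form $\sqrt{\E[\|\vec{W}\|^2]}\leq\sqrt{2\mathrm{e}\,v(\vec{W})\log(2d)}+4\mathrm{e} W\log(2d)$. I will then force each of the two terms to be at most $\delta/2$, which produces two lower bounds on $B$: the variance term needs $B\geq 8\mathrm{e}\eta\log(2d)\|\vec{A}\|_\F^2\|\vec{A}\|^2\delta^{-2}$, and the boundedness term needs $B\geq 16\mathrm{e}\eta\log(2d)\|\vec{A}\|_\F^2\delta^{-1}$; rewriting the latter as $8\mathrm{e}\eta\log(2d)(4\|\vec{A}\|_\F^4\delta^{-2})^{1/2}$ yields the stated maximum. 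No step looks genuinely obstructive — the main thing to be careful about is the PSD-cancellation trick for the variance, without which one would only get $\|\E[\vec{Y}^2]\|\lesssim B^{-2}\eta\|\vec{A}\|_\F^2\|\vec{A}\|^2+B^{-2}\|\vec{A}\|^4$, losing a factor of two that matters for recovering the precise constant in the statement.
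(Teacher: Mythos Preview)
Your proposal is correct and follows essentially the same route as the paper: apply the second inequality of \cref{thm:matrixmoment} with the boundedness constant $W=2\eta\|\vec{A}\|_\F^2/B$ and variance bound $v(\vec{W})\leq \eta\|\vec{A}\|_\F^2\|\vec{A}\|^2/B$, then make each of the two resulting terms at most $\delta/2$. The only cosmetic difference is in bounding $\|\E[\vec{Y}^2]\|$: the paper uses the Loewner inequality $\E[\vec{Y}^2]\preceq B^{-2}(\eta\|\vec{A}\|_\F^2\vec{I}-\vec{A}^\T\vec{A})\vec{A}^\T\vec{A}$ and reads off the eigenvalues, whereas you use the (equally valid) fact that $\|A-B\|\leq\max\{\|A\|,\|B\|\}$ for symmetric PSD $A,B$; both arrive at the same bound $B^{-2}\eta\|\vec{A}\|_\F^2\|\vec{A}\|^2$.
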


\begin{proof}
Since the sampling probabilities satisfy \cref{eqn:sampling_probs}, $\|\vec{a}_j \vec{a}_j^\T \| = \|\vec{a}_j\|^2 \leq  \eta p_j \|\vec{A} \|_\F^2$.
Then, since $\eta \geq 1$,
\begin{equation*}
\left \| \vec{W}_j \right\| 
\leq  \frac{1}{B} \left(\frac{1}{p_j}\| \vec{a}_j \vec{a}_j^\T \| + \| \vec{A}\|^2 \right)
\leq \frac{\eta \|\vec{A}\|_\F^2+\|\vec{A}\|^2}{B}
\leq 
\frac{2 \eta \|\vec{A} \|_\F^2}{B}.
\end{equation*}
Next, observe that
\begin{equation}
\label{eqn:W_norm}
    \vec{W}_j^\T \vec{W}_j
    = \frac{1}{B^2}  \bigg( \frac{\| \vec{a}_j \|^2}{p_j^2} \vec{a}_j\vec{a}_j^\T
    - \frac{1}{p_j} \vec{a}_j\vec{a}_j^\T \ATA
    - \frac{1}{p_j} \ATA \vec{a}_j\vec{a}_j^\T + 
    (\ATA)^2 \bigg),
\end{equation}
Using that $\E[(p_j)^{-1} \vec{a}_j \vec{a}_j^\T] = \ATA$ and $\| \vec{a}_j\|^2 \leq \eta p_j \| \vec{A}\|_\F^2$, we find that
\begin{equation*}
    \E \big[ \vec{W}_j^\T \vec{W}_j \big]
    = \frac{1}{B^2} \bigg( \sum_{i=1}^{n} \frac{\|\vec{a}_i\|^2}{p_i} \vec{a}_i\vec{a}_i^\T - (\ATA)^2 \bigg)
    \preceq \frac{1}{B^2} \big( \eta \| \vec{A} \|_\F^2 \ATA - (\ATA)^2 \big).
\end{equation*}
Here we write $\vec{M}_1 \preceq \vec{M}_2$ if $\vec{M}_2 - \vec{M}_1$ is positive semi-definite.
Note that $\vec{M}_1 \preceq \vec{M}_2$ implies the largest eigenvalue of $\vec{M}_2$ is greater than the largest eigenvalue of $\vec{M}_1$.
Therefore, using that $\vec{0} \preceq \E \big[ \vec{W}_j^\T \vec{W}_j \big]$ followed by the fact $\vec{0} \preceq \ATA \preceq \|\vec{A}\|_\F^2 \vec{I}$,
\begin{equation*}
    \big\| \E \big[ \vec{W}_j^\T \vec{W}_j \big] \big\|
    \leq \frac{1}{B^2} \big\| (\eta \| \vec{A} \|_\F^2 \vec{I} - \ATA) \ATA \big\|
    \leq \frac{\eta \| \vec{A} \|_\F^2 \| \vec{A} \|^2}{B^2} .
\end{equation*}
Thus, since $\vec{W}_j$ is symmetric and the samples in $S$ are iid, we obtain a bound for the variance statistic
\begin{equation}
    \label{eqn:vW_norm}
    v(\vec{W}) 
    = \bigg\| \sum_{j\in S} \E \big[ \vec{W}_j^\T \vec{W}_j \big] \bigg\|
    \leq  \sum_{j\in S} \bigg\| \E \big[ \vec{W}_j^\T \vec{W}_j \big] \bigg\|
    \leq \frac{\eta \| \vec{A} \|_\F^2 \| \vec{A} \|^2}{B} .
\end{equation}

Together with \cref{eqn:W_norm,eqn:vW_norm}, \cref{thm:matrixmoment} implies
\begin{equation}
    \label{eqn:EnormW2}
    \sqrt{\E\big[\|\vec{W}\|^2\big]}
    \leq \left( \frac{2 \mathrm{e} \eta \| \vec{A} \|_\F^2 \| \vec{A} \|^2 \log(2d)}{B} \right)^{1/2} +  \frac{4 \mathrm{e} (2 \eta \|\vec{A}\|_\F^2) \log(2d)}{B}.
\end{equation}
The first term is bounded by $\delta/2$ when 
\begin{equation*}
    B \geq 8 \mathrm{e} \eta \| \vec{A} \|_\F^2 \| \vec{A} \|^2 \log(2 d) \delta^{-2}
\end{equation*}
whereas the second term is bounded by $\delta/2$ when
\begin{equation*}
    B \geq 16 \mathrm{e} \eta \| \vec{A} \|_\F^2  \log(2 d)  \delta^{-1}.
\end{equation*}
The result follows by taking the max of these quantities.
\end{proof}

Our main result is the following theorem.
\begin{theorem} \label{thm:linear_cons}
Consider \ref{stoc_heavyball} applied to a strongly convex quadratic objective \cref{eqn:lls} with stochastic gradients \cref{eqn:mb_gradient} whose sampling probabilities satisfy \cref{eqn:sampling_probs}.
Fix parameters $\alpha$ and $\beta$ satisfying ${(1-\sqrt{\beta})^2}/{\lmin}
    < \alpha < 
    {(1+\sqrt{\beta})^2}/{\lmax}$. 
For any $k^*>1$  choose
\begin{equation*}
    B 
    \geq 16 \mathrm{e} \eta \log(2d) \max\left\{ \frac{\| \vec{A} \|_\F^2 \| \vec{A} \|^2\alpha^2 \condCbd(\alpha,\beta)^2 k^*}{ \beta \log(k^*)}, \left( \frac{2\| \vec{A} \|_\F^4 \alpha^2 \condCbd(\alpha,\beta)^2 k^*}{\beta \log(k^*)} \right)^{1/2} \right\}.
\end{equation*}
Then , for all $k>0$, assuming that the minimizer $\vec{x}^*$ satisfies $\vec{A}\vec{x}^* = \vec{b}$, the \ref{stoc_heavyball} iterates satisfy
\begin{equation*}
\E \big[ {\left\|\vec{x}_{k} -\vec{x}^{*} \right\|} \big]
\leq \sqrt{2} \condCbd(\alpha,\beta) \max\{d,(k^*)^{k/k^*}\} ( \sqrt{\beta})^{k} {\left\| \vec{x}_0 -\vec{x}^{*}\right\|},
\end{equation*}
where $\condCbd(\alpha,\beta)$ is the condition number of eigenvector matrix $\vec{U}\vec{C}$  for $\vec{T}$ defined in \cref{TtoC}.
\end{theorem}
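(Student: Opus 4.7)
The plan is to cast \ref{stoc_heavyball} as a product of independent random $2d\times 2d$ transition matrices, conjugate by the HBM eigenvector matrix of \cref{sec:HBM} to turn the deterministic part into a diagonal matrix of norm $\sqrt{\beta}$, and then apply the product-concentration bound \cref{prop:products}.

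Since $\vec{A}\vec{x}^{*}=\vec{b}$, the stochastic gradient simplifies to $\nabla f_{S_k}(\vec{x}_k)=\vec{H}_{S_k}(\vec{x}_k-\vec{x}^{*})$ with $\vec{H}_{S_k}=B^{-1}\sum_{j\in S_k} p_j^{-1}\vec{a}_j\vec{a}_j^{\T}$ and $\E[\vec{H}_{S_k}]=\ATA$. Mirroring the derivation of \cref{define_T} yields
\begin{equation*}
\begin{bmatrix}\vec{x}_{k+1}-\vec{x}^{*}\\ \vec{x}_{k}-\vec{x}^{*}\end{bmatrix}=\vec{T}_{S_k}\begin{bmatrix}\vec{x}_{k}-\vec{x}^{*}\\ \vec{x}_{k-1}-\vec{x}^{*}\end{bmatrix},\qquad \vec{T}_{S_k}=\vec{T}+\alpha\begin{bmatrix}\vec{W}_k & \vec{0}\\ \vec{0}&\vec{0}\end{bmatrix},
\end{equation*}
where $\vec{W}_k=\ATA-\vec{H}_{S_k}$ is precisely the zero-mean, independent-across-$k$ matrix whose second moment is controlled by \cref{thm:sqnormbd}. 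Iterating and using $\vec{x}_1=\vec{x}_0$ reduces the theorem to bounding $\E[\|\vec{T}_{S_{k-1}}\cdots\vec{T}_{S_1}\|]$.

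Because the hypothesis on $(\alpha,\beta)$ is exactly \cref{complex_eigs_all}, the non-unitary diagonalization $\vec{T}=(\vec{U}\vec{C})\vec{D}(\vec{U}\vec{C})^{-1}$ from \cref{TtoC,eqn:eigdecomp} is available, with $\|\vec{D}\|=\sqrt{\beta}$ and $\|\vec{U}\vec{C}\|\|(\vec{U}\vec{C})^{-1}\|=\condCbd(\alpha,\beta)$. Setting $\vec{Y}_i=(\vec{U}\vec{C})^{-1}\vec{T}_{S_i}(\vec{U}\vec{C})$, I have $\E[\vec{Y}_i]=\vec{D}$, and the block-diagonal form of $\vec{T}_{S_i}-\vec{T}$ combined with sub-multiplicativity gives $\sqrt{\E\|\vec{Y}_i-\vec{D}\|^{2}}\le \alpha\,\condCbd(\alpha,\beta)\sqrt{\E\|\vec{W}_i\|^{2}}$. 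Matching the hypothesis on $B$ with \cref{thm:sqnormbd} forces the calibration $\delta^{2}=\beta\log(k^{*})/(2\alpha^{2}\condCbd(\alpha,\beta)^{2}k^{*})$, so that $\sqrt{\E\|\vec{Y}_i-\vec{D}\|^{2}}\le \sigma\sqrt{\beta}$ with $\sigma^{2}=\log(k^{*})/(2k^{*})$.

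Finally I apply \cref{prop:products} with $q_i=\sqrt{\beta}$ to the $k-1$ factors $\vec{Y}_{k-1},\ldots,\vec{Y}_1$. The variance parameter satisfies $v\le k\sigma^{2}=k\log(k^{*})/(2k^{*})$, and a case split on $2v$ versus $\log k$ gives
\begin{equation*}
\exp\!\left(\sqrt{2v\,\max\{2v,\log k\}}\right)\le\max\!\left\{k,\;(k^{*})^{k/k^{*}}\right\},
\end{equation*}
since if $2v\le\log k$ then $\sqrt{2v\log k}\le\log k$, and otherwise $\sqrt{(2v)^{2}}=2v\le k\log(k^{*})/k^{*}$. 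Combining with $\|\vec{T}_{S_{k-1}}\cdots\vec{T}_{S_1}\|\le\condCbd(\alpha,\beta)\,\|\vec{Y}_{k-1}\cdots\vec{Y}_1\|$ and the opening factor of $\sqrt{2}$ arising from $\vec{x}_1=\vec{x}_0$ produces the claimed bound. The main obstacle is the calibration of $\delta$: it must be tuned so that a \emph{single} application of \cref{prop:products} across all $k-1$ factors produces the prefactor $\max\{k,(k^{*})^{k/k^{*}}\}$ uniformly in $k$, without chunking the trajectory into blocks of length $k^{*}$. This calibration is exactly what pins down the batch-size threshold in the hypothesis.
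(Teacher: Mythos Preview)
Your proposal is correct and follows essentially the same route as the paper: write the \ref{stoc_heavyball} error recursion as a product of independent random transition matrices, conjugate by the eigenvector matrix $\vec{U}\vec{C}$ so that the mean becomes $\vec{D}$ with $\|\vec{D}\|=\sqrt{\beta}$, bound the per-step variance via \cref{thm:sqnormbd}, and apply \cref{prop:products} with the calibration $\sigma^{2}=\log(k^{*})/(2k^{*})$ to obtain the $\max\{k,(k^{*})^{k/k^{*}}\}$ prefactor. The only cosmetic difference is that the paper lets $\delta$ denote the target for $\sqrt{\E\|\vec{X}_{S_i}-\E\vec{X}_{S_i}\|^{2}}$ rather than for $\sqrt{\E\|\vec{W}\|^{2}}$, but after absorbing the factor $\alpha\,\condCbd(\alpha,\beta)$ the two normalizations coincide and yield the same batch-size threshold.
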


\begin{remark}
For $k \leq k^*$, $(k^*)^{k/k^*} \leq \max\{\mathrm{e}, k\}$.
Thus, \ref{stoc_heavyball} converges at at rate nearly $\sqrt{\beta}$ for $k\leq \max\{d,k^*\}$. For $k>\max\{d,k^*\}$, the convergence is still linear for $k^*$ sufficiently large, but at a rate slower than $\sqrt{\beta}$.
Specifically, \ref{stoc_heavyball} converges at a rate $(\sqrt{\beta})^{1-\delta}$ where $\delta = 2\log(k^*)/(k^*\log(1/\beta))$.
\end{remark}

\begin{proof}

Due to assumption of consistency, we have that $\vec{A}\vec{x}^* = \vec{b}$.
Therefore, we can write the minibatch gradient \cref{eqn:mb_gradient} as 
\begin{equation}
\label{simple:minibatch}
\nabla f_{S_k}(\vec{x}_{k}) = \frac{1}{B} \sum_{j \in S_k} \frac{1}{p_j} \vec{a}_j  \vec{a}_j^\T ( \vec{x}_{k} - \vec{x}^{*}).
\end{equation}
Define the random matrix 
\begin{equation*}
    \vec{M}_{S_k} = \frac{1}{B} \sum_{j \in S_k} \frac{1}{p_j} \vec{a}_j \vec{a}_j^\T
\end{equation*}
and note that $\E[\vec{M}_{S_k}] = \ATA$.
Then, analogously to \cref{define_T}, the \ref{stoc_heavyball} iterates satisfy the recurrence
\begin{align*}
    \begin{bmatrix}\vec{x}_{k+1} - \vec{x}^{*} \\ \vec{x}_{k} - \vec{x}^{*} \end{bmatrix} &=  \underbrace{ \begin{bmatrix} (1+ \beta ) \vec{I} - \alpha \vec{M}_{S_k} & - \beta \vec{I} \\
\vec{I} & \vec{0} \end{bmatrix}}_{\vec{Y}_{S_k} = \vec{Y}_{S_k}(\alpha, \beta) } \begin{bmatrix} \vec{x}_{k} - \vec{x}^{*} \\\vec{x}_{k-1} - \vec{x}^{*} \end{bmatrix},
\end{align*}
where $\vec{Y}_{S_k}$ is the stochastic transition matrix at iteration $k$.
After $k$ iterations, the error satisfies 
\begin{align*}
\bigg\|\begin{bmatrix}
\vec{x}_{k+1}-\vec{x}^{*} \\
\vec{x}_k -\vec{x}^{*}
\end{bmatrix}\bigg\| &\leq \| \vec{Y}_{S_k} \vec{Y}_{S_{k-1}}\cdots \vec{Y}_{S_1} \| \bigg\|\begin{bmatrix}
\vec{x}_{1} -\vec{x}^{*} \\
\vec{x}_0 -\vec{x}^{*}
\end{bmatrix}\bigg\|, 
\end{align*}
so our goal is to bound the norm of the random matrix $\vec{Y}_{S_k} \vec{Y}_{S_{k-1}}\cdots \vec{Y}_{S_1}$.

This is a product of random matrices, so we may hope to apply \cref{prop:products}.
However, while $\E[ \vec{Y}_{S_i} ]= \vec{T}$, where $\vec{T}$ is the deterministic transition matrix \cref{define_T}, $\|\E[ \vec{Y}_{S_i}]\|$ is not necessarily bounded by $\sqrt{\beta}$.
Thus, to apply \cref{prop:products} we will instead consider
\begin{equation}
\label{Zmat}
    \vec{Z}_{k} = (\vec{U}\vec{C})^{-1} \vec{Y}_{S_k} \vec{Y}_{S_{k-1}}\cdots \vec{Y}_{S_1} (\vec{U}\vec{C})
    = \vec{X}_{S_k} \vec{X}_{S_{k-1}} \cdots \vec{X}_{S_1} 
\end{equation}
where $\vec{X}_{S_i} = (\vec{U}\vec{C})^{-1} \vec{Y}_{S_i} (\vec{U}\vec{C})$ and $\vec{U}$ and $\vec{C}$ are the matrices from \cref{TtoC}.
Then, as desired,
\begin{equation*}
\| \E  [\vec{X}_{S_i}]  \|  
= \| (\vec{U}\vec{C})^{-1} \E[ \vec{Y}_{S_i}] (\vec{U}\vec{C})  \|  
= \| (\vec{U}\vec{C})^{-1} \vec{T} (\vec{U}\vec{C})  \| 
= \| \vec{D} \|
= \sqrt{\beta}. 
\end{equation*}
Thus, if we can guarantee that the variances $\{ \sqrt{\E[ \| \vec{X}_{S_i} - \E [\vec{X}_{S_i}] \|^2 ]} \}$ are not too large, we can apply \cref{prop:products} to obtain a rate similar to \ref{heavyball}.

Towards this end, note that
\begin{equation*}
\vec{Y}_{S_i} - \E[ \vec{Y}_{S_i} ]
= \sum_{j\in S_i} \frac{\alpha}{B} \begin{bmatrix} - p_j^{-1} \vec{a}_{j} \vec{a}_{j}^\T + \ATA & \vec{0} \\ \vec{0} & \vec{0} \end{bmatrix} 
= \alpha \sum_{j \in S_i } \begin{bmatrix} \vec{W}_j & \vec{0} \\ \vec{0} & \vec{0} \end{bmatrix},
\end{equation*}
where $\vec{W}_j$ is as in \cref{thm:sqnormbd}.
This and the fact that $\| \vec{X}_{S_i} - \E[\vec{X}_{S_i}] \| = \| (\vec{U}\vec{C})^{-1}( \vec{Y}_{S_i} - \E[\vec{Y}_{S_i}] )(\vec{U}\vec{C}) \|$ implies
\begin{equation}
    \label{eqn:vY_bound}
    \sqrt{\E \big[ \| \vec{X}_{S_i} - \E [\vec{X}_{S_i} ] \|^2 \big] }
    \leq \condCbd(\alpha,\beta) \sqrt{\E\big[ \| \vec{Y}_{S_i} - \E [\vec{Y}_{S_i}] \|^2 \big]}
    \leq \alpha \condCbd(\alpha,\beta)\sqrt{\E \big[ \| \vec{W} \|^2 \big]},
\end{equation}
where $\vec{W} = \sum_{j\in S_j} \vec{W}_j$.
Using \cref{thm:sqnormbd,eqn:vY_bound}, we have $\sqrt{\E\|\vec{X}_{S_i} - \E \vec{X}_{S_i}\|^2} \leq \delta$ provided that the batch size $B$ satisfies
\begin{equation*}
    B \geq {8 \mathrm{e}\eta \log(2d)} \max\big\{ \| \vec{A} \|_\F^2  \|\vec{A}\|^2 \alpha^2 \condCbd(\alpha,\beta)^2 \delta^{-2} , ( 4 \| \vec{A} \|_\F^4 \alpha^2 \condCbd(\alpha,\beta)^2 \delta^{-2} )^{1/2} \big\}.
\end{equation*}
Applying \cref{prop:products} to the product \cref{Zmat} with the parameters
\begin{equation*}
    q_i = \sqrt{\beta},
    \quad
    \sigma_i = \delta/\sqrt{\beta},
    \quad \text{and} \quad
    v = \textstyle \sum_{i=1}^k  \sigma_i^2 = k \delta^2 / \beta
\end{equation*}
gives the bound
\begin{equation*}
\E \big[ \| \vec{Z}_k\|  \big]
\leq (\sqrt{\beta})^{k} \exp{\left( \sqrt{2v\max\{2v, \log(d)\}} \right)}.
\end{equation*}
Set $\delta^2 = \beta \log(k^*)/(2k^*)$ so that $2v = k\log(k^*)/k^*$.
This gives the desired expression for $B$.
Moreover, we then have
\begin{equation*}
    \E \big[ \| \vec{Z}_k\| \big]
    \leq (\sqrt{\beta})^{k} \exp\left( \max\{2v, \log(d)\} \right)
    = (\sqrt{\beta})^{k} \max\{ (k^*)^{k/k^*}, d \}.
\end{equation*}
Thus, we find that
\begin{equation*}
    \E\left[\| \vec{Y}_{S_k} \vec{Y}_{S_{k-1}}\cdots \vec{Y}_{S_1}\|\right]
    = \E\left[\| (\vec{U}\vec{C}) \vec{Z}_k (\vec{U}\vec{C})^{-1} \|\right]
    \leq \condCbd(\alpha,\beta) (\sqrt{\beta})^{k} \max\{ (k^*)^{k/k^*}, d \},
\end{equation*}
giving the desired bound for the iterates.
\end{proof}

The expressions for the required batch size in \cref{thm:linear_cons} is somewhat complicated, but can be simplified in the large condition number limit.
\begin{corollary}\label{thm:B_order}
Fix $c\in(0,2)$.
There exist parameters $\alpha$ and $\beta$ and a constant $C>0$ (depending on $c$) such that, for all $\kappa$ sufficiently large,
the \ref{stoc_heavyball} iterates converge in expected norm at least at a linear rate $1 - {c}/{\sqrt{\kappa}}$ provided that $B \geq C \eta d\log(d) \bar{\kappa} \sqrt{\kappa}$.
\end{corollary}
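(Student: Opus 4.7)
The plan is to apply \cref{thm:linear_cons} with a carefully perturbed version of the optimal heavy-ball parameters from \cref{eq:params_L,eq:params_alpha}, and then bound each quantity entering its batch-size condition in terms of $\kappa$, $\bar{\kappa}$, $d$, and $\eta$ alone.

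First, fix any $\theta\in(0,1-c^2/4)$, which is possible since $c<2$, and set $\gamma=\theta\lmin$, so $\ell=(1-\theta)\lmin$ and $L=\lmax+\theta\lmin$. For $\kappa$ large, $L/\ell\to\kappa/(1-\theta)$, hence $\sqrt{\beta}\leq 1-2\sqrt{1-\theta}/\sqrt{\kappa}+O(1/\kappa)$, leaving a strictly positive gap $2\sqrt{1-\theta}-c>0$ between the base rate $\sqrt{\beta}$ and the target $1-c/\sqrt{\kappa}$. Next I would choose $k^*=C_1\sqrt{\kappa}\log(\kappa)$: by the remark following \cref{thm:linear_cons} the effective linear rate is $(\sqrt{\beta})^{1-\delta}$ with $\delta=2\log(k^*)/(k^*\log(1/\beta))$, and since $\log(1/\beta)=\Theta(1/\sqrt{\kappa})$ one has $\delta=\Theta(1/C_1)$. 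Taking $C_1$ sufficiently large (depending only on $c$) therefore forces $(\sqrt{\beta})^{1-\delta}\leq 1-c/\sqrt{\kappa}$ for all $\kappa$ sufficiently large.

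It then remains to plug the resulting parameter estimates into the batch-size condition. From \cref{thm:HBM_condno} and \cref{eq:params_alpha}, $\condCbd(\alpha,\beta)=O(\sqrt{\kappa/\theta})$, $\alpha=O(1/\lmax)$, and $\beta\to 1$; by construction $\|\vec{A}\|^2=\lmax$ and $\|\vec{A}\|_\F^2=d\lmin\bar{\kappa}$. With $k^*/\log(k^*)=\Theta(\sqrt{\kappa})$, the first argument of the max inside \cref{thm:linear_cons} is
\[
\frac{\|\vec{A}\|_\F^2\,\|\vec{A}\|^2\,\alpha^2\,\condCbd(\alpha,\beta)^2\, k^*}{\beta\,\log(k^*)}
= O\!\left(\frac{d\,\bar{\kappa}\,\sqrt{\kappa}}{\theta}\right),
\]
while the second argument collapses to $O(d\,\bar{\kappa}\,\kappa^{1/4})$ and is dominated for large $\kappa$. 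Combined with the outer factor $16\mathrm{e}\,\eta\,\log(2d)$, this yields the claimed bound $B\geq C\,\eta\, d\,\log(d)\,\bar{\kappa}\,\sqrt{\kappa}$ with $C$ depending only on $c$.

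The main obstacle, which I would spell out carefully, is coordinating $\theta$ and $C_1$: $\theta$ must sit strictly below $1-c^2/4$ to leave room for the multiplicative slowdown $\delta$ produced by the $(k^*)^{k/k^*}$ factor, and $C_1$ must simultaneously be large enough to make $\delta$ small yet small enough that $k^*$ itself does not inflate the batch-size bound beyond order $\sqrt{\kappa}$. The choice $k^*=C_1\sqrt{\kappa}\log(\kappa)$ threads this needle, and the ``$\kappa$ sufficiently large'' hypothesis absorbs all the lower-order $O(1/\kappa)$ corrections; everything else is direct substitution into \cref{thm:linear_cons}.
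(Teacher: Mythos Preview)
The proposal is correct and follows essentially the same route as the paper: both perturb the optimal heavy-ball parameters by $\gamma=\theta\lmin$ (the paper writes $c_1$ for your $\theta$), pick $k^*$ so that $k^*/\log(k^*)=\Theta(\sqrt{\kappa})$, and then substitute $\|\vec{A}\|_\F^2=d\bar{\kappa}\lmin$, $\|\vec{A}\|^2=\kappa\lmin$, $\alpha=\Theta(1/\lmax)$, and the bound on $\condCbd(\alpha,\beta)$ from \cref{thm:HBM_condno} into the batch-size condition of \cref{thm:linear_cons}. The only cosmetic differences are that the paper fixes explicit constants $c_1=1-((c+6)/8)^2$ and $c_2=(2-c)/4$ while you argue abstractly, and the paper defines $k^*$ implicitly via $k^*/\log(k^*)=\sqrt{\kappa}/c_2$ rather than your explicit $k^*=C_1\sqrt{\kappa}\log(\kappa)$; both yield the same asymptotics. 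One small remark: your final paragraph overstates the tension on $C_1$---there is no upper constraint, since a larger $C_1$ only inflates the batch-size bound by a constant factor that is absorbed into $C$.
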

\begin{proof}
Suppose $\gamma = c_1 \lmin$ for $c_1\in(0,1)$.
Then, using that the definitions of $L$ and $\ell$ from \cref{eq:params_L} imply that $L/\ell = (\lmax + c_1\lmin) / (\lmin - c_1\lmin) = \kappa/(1-c_1) + c_1 / (1-c_1)$, we have
\begin{equation*}
    \sqrt{\beta} 
    = \frac{\sqrt{L/\ell} - 1}{\sqrt{L/\ell}+1}
    = 1 - \frac{2}{\sqrt{L/\ell}+1}
    = 1 - \frac{2\sqrt{1-c_1}}{\sqrt{\kappa+c_1}+\sqrt{1-c_1}}.
\end{equation*}
Now, set $k^*/\log(k^*) = \sqrt{\kappa}/c_2$ for some $c_2 > 0$.
Then, for
\begin{equation*}
    1-\delta 
    = 1 - \frac{\log(k^*)}{k^*\log(1/\sqrt{\beta})}
    = 1 - \frac{c_2}{\sqrt{\kappa} \log(1/\sqrt{\beta})}
    = 1 - \frac{c_2}{2\sqrt{1-c_1}} + \mathcal{O}(\kappa^{-1}).
\end{equation*}
we have 
\begin{equation*}
    (k^*)^{1/k^*} \sqrt{\beta}
    = (\sqrt{\beta})^{1-\delta}
    = 1 - \frac{2 \sqrt{1 - c_1} - c_2}{\sqrt{\kappa}} + \mathcal{O}(\kappa^{-1}).
\end{equation*}
Therefore, if we take $c_1 = 1-((c+6)/8)^2 $ and $c_2 = (2-c)/4$ we have that, for $\kappa$ sufficiently large,
\begin{equation*}
    (k^*)^{1/k^*} \sqrt{\beta}
    = 1-\frac{(c+2)/2}{\sqrt{\kappa}} + o(1)
    \leq 1-\frac{c}{\sqrt{\kappa}}.
\end{equation*}
Using \cref{thm:HBM_condno} we have that
\begin{equation*}
    \left(\alpha \lmin \condCbd(\alpha,\beta) \right)^2
    \leq \frac{4 (\lmin)^2}{{\gamma(\gamma + \lmax-\lmin)}}
    \leq \frac{\lmin}{\gamma}\frac{4{\lmin}}{{\lmax-\lmin}}
    = \mathcal{O}(\kappa^{-1}).
\end{equation*}
This, with the fact that $\beta = \mathcal{O}(1)$, implies 
\begin{equation*}
    \frac{\| \vec{A} \|_\F^2 \| \vec{A} \|^2  \alpha^2 \condCbd(\alpha,\beta)^2 k^*}{\beta \log(k^*)}
    = \frac{ (d \bar{\kappa} \lmin) (\kappa \lmin) \alpha^2 \condCbd(\alpha,\beta)^2 (4 \sqrt{\kappa})}{\beta}
    = \mathcal{O}(d\bar{\kappa} \sqrt{\kappa})
\end{equation*}
and 
\begin{equation*}
    \left(\frac{\| \vec{A} \|_\F^4 \alpha^2 \condCbd(\alpha,\beta)^2 k^*}{\beta \log(k^*)}\right)^{1/2}
    = \left(\frac{(d\bar{\kappa} \lmin)^2 \alpha^2 \condCbd(\alpha,\beta)^2 (4 \sqrt{\kappa})}{\beta}\right)^{1/2}
    =  \mathcal{O}({d\bar{\kappa} /\sqrt[4]{\kappa}}).
\end{equation*}
Thus, the bound on $B$ becomes $B \geq \mathcal{O}( \eta d\log(d)\bar{\kappa} \sqrt{\kappa} )$.
\end{proof}

\subsection{Inconsistent least squares problems}
\label{sec:inconsistent}

Our results can be extended to inconsistent systems. 
On such systems, the stochastic gradients at the optimal point $\vec{x}^{*}$ need not equal zero, even though $\nabla f(\vec{x}^*) = \vec{0}$.
As a result, stochastic gradient methods will only converge to within a \emph{convergence horizon} of the minimizer $\vec{x}^{*}$.

\begin{remark}
As shown in \citep[Theorem 2.1]{needell2010randomized}, the RK iterates converge to within an expected radius $\sqrt{d \bar{\kappa}} \sigma$ of the least squares solution, where $\sigma = \max_i {|r_i|}/{\|\vec{a}_i\|}$.
The minibatch-RK algorithm from \citep{moorman2020randomized} improves the convergence horizon by roughly a factor of $\sqrt{B}$.
\end{remark}

\begin{theorem} \label{thm:linear_cons_inconsistent}
In the setting of \cref{thm:linear_cons}, define $\vec{r} = \vec{A}\vec{x}^*- \vec{b}$ and $\sigma = \max_i {|r_i|}/{\|\vec{a}_i\|}$. 
Let the batch size $B$ satisfy the conditions in \cref{thm:linear_cons}.
Then, provided $k^*$ is chosen so that $\delta = 2\log(k^*)/(k^*\log(1/\beta)) < 1$, the \ref{stoc_heavyball} iterates satisfy
\begin{equation*}
\mathbb{E} \big[ \left\|\vec{x}_{k} -\vec{x}^{*} \right\| \big]
\leq 
\sqrt{2} \condCbd(\alpha,\beta)  \max\{d,(k^{*})^{k/k^{*}}\}(\sqrt{\beta})^{k} {\left\| \vec{x}_0 -\vec{x}^{*}\right\|} +  R
\end{equation*}
where 
\begin{equation*}
    R \leq  \frac{\alpha \condCbd(\alpha,\beta) (d+1)}{1-(\sqrt{\beta})^{1-\delta}} \left( \left(\frac{2\eta \|\vec{A}\|_\F^2 \log(d+1)\| \vec{r} \|^2}{B}\right)^{1/2}  + \frac{\eta  \|\vec{A} \|_\F^2  \log(d+1)\sigma}{3B} \right).
\end{equation*}
\end{theorem}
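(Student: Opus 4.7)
The plan is to extend the argument of \cref{thm:linear_cons} by accounting for the non-vanishing stochastic gradient at $\vec{x}^*$. Even though the exact gradient vanishes at the minimizer ($\nabla f(\vec{x}^*) = \vec{A}^\T \vec{r} = \vec{0}$), the minibatch gradient $\vec{g}_{S_k} := \nabla f_{S_k}(\vec{x}^*) = B^{-1}\sum_{j\in S_k} p_j^{-1} r_j \vec{a}_j$ is only mean zero in expectation. Splitting $\nabla f_{S_k}(\vec{x}_k) = \vec{M}_{S_k}(\vec{x}_k - \vec{x}^*) + \vec{g}_{S_k}$ and substituting into \ref{stoc_heavyball} yields the affine analogue of the recursion from \cref{thm:linear_cons}:
\begin{equation*}
\begin{bmatrix}\vec{x}_{k+1}-\vec{x}^*\\\vec{x}_k-\vec{x}^*\end{bmatrix} = \vec{Y}_{S_k} \begin{bmatrix}\vec{x}_k-\vec{x}^*\\\vec{x}_{k-1}-\vec{x}^*\end{bmatrix} - \alpha\begin{bmatrix}\vec{g}_{S_k}\\\vec{0}\end{bmatrix}.
\end{equation*}
Unrolling gives the homogeneous product term (already controlled by \cref{thm:linear_cons}, producing the $\sqrt{2}\condCbd(\alpha,\beta)k(\sqrt{\beta})^k\|\vec{x}_0-\vec{x}^*\|$ contribution) plus an accumulated noise term of the form $-\alpha\sum_{i=1}^k \vec{Y}_{S_k}\vec{Y}_{S_{k-1}}\cdots\vec{Y}_{S_{i+1}} [\vec{g}_{S_i}^\T,\vec{0}^\T]^\T$, which will furnish the remainder $R$.

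Next I would bound this noise term in expected norm. By the triangle inequality followed by the independence of the product $\vec{Y}_{S_k}\cdots\vec{Y}_{S_{i+1}}$ from $\vec{g}_{S_i}$ (since the samples $S_j$ are drawn independently), the expected norm of each summand factors as $\E\|\vec{Y}_{S_k}\cdots\vec{Y}_{S_{i+1}}\|\cdot \E\|\vec{g}_{S_i}\|$. The product factor is estimated exactly as in the proof of \cref{thm:linear_cons}, giving the bound $\condCbd(\alpha,\beta)(\sqrt{\beta})^{k-i}\max\{(k^*)^{(k-i)/k^*}, k-i\}$.

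For $\E\|\vec{g}_{S_i}\|$, I would apply \cref{thm:matrixmoment} to the sum of $B$ iid zero-mean random vectors $\vec{v}_j = (Bp_{J})^{-1} r_J \vec{a}_J$ (viewed as $d\times 1$ matrices, so the dimensional factor becomes $\log(d+1)$). The uniform bound follows from $|r_i|\leq\sigma\|\vec{a}_i\|$ together with \cref{asm:sampling_probs}, yielding $\|\vec{v}_j\|\leq \eta\sigma\|\vec{A}\|_\F^2/B$. The relevant variance statistic is $\sum_j\E[\vec{v}_j^\T\vec{v}_j] = \sum_j\E\|\vec{v}_j\|^2$, which (after cancelling the $p_i$ factors via $\|\vec{a}_i\|^2\leq \eta p_i\|\vec{A}\|_\F^2$) is bounded by $\eta\|\vec{A}\|_\F^2\|\vec{r}\|^2/B$. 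Substituting into \cref{thm:matrixmoment} produces exactly the parenthesized factor in the claimed bound for $R$.

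Finally, I would sum the resulting series $\sum_{j=0}^{k-1}(\sqrt{\beta})^j \max\{(k^*)^{j/k^*}, j\}$. Using the identity $\sqrt{\beta}\cdot(k^*)^{1/k^*} = (\sqrt{\beta})^{1-\delta}$ with $\delta = 2\log(k^*)/(k^*\log(1/\beta))$ and the (mildly loose) envelope $\max\{(k^*)^{j/k^*}, j\}\leq (k^*+1)(k^*)^{j/k^*}$, the sum telescopes into a geometric series bounded by $(k^*+1)/(1-(\sqrt{\beta})^{1-\delta})$, which is finite precisely under the hypothesis $\delta<1$; this produces the remaining prefactor in $R$. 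The main bookkeeping obstacle is verifying this envelope (which amounts to comparing linear vs.\ exponential growth past $j=k^*$) and confirming the geometric summation constant; all other ingredients are routine combinations of \cref{thm:linear_cons} with matrix Bernstein applied to the residual stochastic gradient.
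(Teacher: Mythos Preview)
Your proposal is correct and follows essentially the same route as the paper: the same affine recursion, the same unrolling into a homogeneous product term (handled by \cref{thm:linear_cons}) plus an accumulated noise term, the same factoring of expectations via independence, and the same application of \cref{thm:matrixmoment} to bound $\E\|\vec{g}_{S_i}\|$. Two small points where the paper differs: (i) for the variance statistic in \cref{thm:matrixmoment} applied to the $d\times 1$ vectors you must also check $\big\|\sum_j\E[\vec{v}_j\vec{v}_j^\T]\big\|$, not only the scalar $\sum_j\E[\vec{v}_j^\T\vec{v}_j]$ --- the paper verifies both are bounded by $\eta\|\vec{A}\|_\F^2\|\vec{r}\|^2/B$; (ii) for the series, the paper does not use your uniform envelope but instead splits the sum as $\sum_{j<k^*} k^*(\sqrt{\beta})^j + \sum_{j\geq 0}(k^*)^{j/k^*}(\sqrt{\beta})^j$, using that $\max\{j,(k^*)^{j/k^*}\}\leq k^*$ for $j<k^*$ and $\max\{j,(k^*)^{j/k^*}\}=(k^*)^{j/k^*}$ for $j\geq k^*$ (the latter requiring $k^*\geq \mathrm{e}$), then bounds each geometric piece separately to reach the same $(k^*+1)/(1-(\sqrt{\beta})^{1-\delta})$.
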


\begin{remark}The term in $R$ containing $\|\vec{r}\|$ scales with $1/\sqrt{B}$ whereas the term containing $\sigma$ scales with $1/B$. 
Thus, when $B$ is large, the term in $R$ involving $\sigma$ becomes small relative to the term involving $\|\vec{r}\|$. 
\end{remark}

\begin{remark}
In the large $\kappa$ limit considered in \cref{thm:B_order}, $k^*/\log(k^*) = \mathcal{O}(\sqrt{\kappa})$ and $1/(1-(\sqrt{\beta})^{1-\delta}) = \mathcal{O}(\sqrt{\kappa})$.
Thus, $(k^*+1) / (1-(\sqrt{\beta})^{1-\delta}) = \mathcal{O}(\kappa)$.
\end{remark}

\begin{proof}[Proof of \cref{thm:linear_cons_inconsistent}.]

Since $\vec{b} = \vec{A}\vec{x}^*-\vec{r}$, our stochastic gradients for inconsistent systems satisfy
\begin{equation*}
    \nabla f_{S_k}(\vec{x}_{k}) = 
    \frac{1}{|S_k|} \sum_{i \in S_k} \left( \frac{1}{p_i} \vec{a}_i \vec{a}_i^\T ( \vec{x}_{k} -\vec{x}^{*})
    + \frac{r_i}{p_i} \vec{a}_i \right)
    = \vec{M}_{S_k} (\vec{x}_{k} -\vec{x}^{*}) + \vec{r}_{S_k}
\end{equation*}
where $\vec{M}_{S_k}$ is as in the proof of \cref{thm:linear_cons}, and define
\begin{equation*}
    \vec{r}_{S_k} = \frac{1}{|S_k|} \sum_{i \in S_k} \frac{r_i}{p_i} \vec{a}_i.
\end{equation*}
We therefore find the iterates satisfy the update formula
\begin{equation*}
\begin{bmatrix}\vec{x}_{k+1} - \vec{x}^{*} \\ \vec{x}_{k} - \vec{x}^{*} \end{bmatrix} 
=  \underbrace{ \begin{bmatrix} (1+ \beta ) \vec{I} - \alpha \vec{M}_{S_k} & - \beta \vec{I} \\
\vec{I} & \vec{0} \end{bmatrix}}_{\vec{Y}_{S_k} = \vec{Y}_{S_k}(\alpha, \beta) } 
\begin{bmatrix} \vec{x}_{k} - \vec{x}^{*} \\\vec{x}_{k-1} - \vec{x}^{*} \end{bmatrix} + 
\alpha \begin{bmatrix} \vec{r}_{S_k} \\ \vec{0} \end{bmatrix}.
\end{equation*}
Thus, after $k$ iterations, the error satisfies 
\begin{equation*}
\begin{bmatrix} \vec{x}_{k+1}-\vec{x}^{*} \\ \vec{x}_k -\vec{x}^{*} \end{bmatrix}
\leq \bigg( \prod_{i=1}^{k} \vec{Y}_{S_i} \bigg)
\begin{bmatrix} \vec{x}_{1} -\vec{x}^{*} \\ \vec{x}_0 -\vec{x}^{*} \end{bmatrix} 
+ \alpha \sum_{j=1}^{k} \bigg( \prod_{i=j+1}^{k} \vec{Y}_{S_i} \bigg) \begin{bmatrix} \vec{r}_{S_j} \\ \vec{0} \end{bmatrix}.
\end{equation*}
The first term is identical to the case $\vec{r} = 0$, so we have
\begin{align*}
\frac{R}{\alpha} \leq \E \bigg[ \bigg\| \sum_{j=1}^{k} \bigg( \prod_{i=j+1}^{k} \vec{Y}_{S_i} \bigg) \begin{bmatrix} \vec{r}_{S_j} \\ 0 \end{bmatrix} \bigg\| \bigg]
&\leq  \sum_{j=1}^{k} \E \bigg[ \bigg\| \bigg( \prod_{i=j+1}^{k} \vec{Y}_{S_i} \bigg) \bigg\| 
\bigg] \E \bigg[ \bigg\| \begin{bmatrix} \vec{r}_{S_{j}} \\ \vec{0} \end{bmatrix} \bigg\| \bigg].
\end{align*}
Here we have used the triangle inequality and definition of operator norm followed by the linearity of expectation and independence of the minibatch draws across iterations.

As in the proof of \cref{thm:linear_cons},
\begin{equation*}
    \E \big[ \| \vec{Y}_{S_{k}} \vec{Y}_{S_{k-1}}\cdots \vec{Y}_{S_{j+1}} \| \big]
    \leq \condCbd(\alpha,\beta) \max\{d,(k^*)^{(k-j)/k^*}\} (\sqrt{\beta})^{k-j}.
\end{equation*}
Therefore, assuming all minibatch draws $\{S_j\}$ are identically distributed, we have
\begin{align*}
    \frac{R}{\alpha}
    &\leq  \bigg( \sum_{j=1}^{k} \condCbd(\alpha,\beta) \max\{d,(k^*)^{(k-j)/k^*}\} (\sqrt{\beta})^{k-j} \bigg) \E [\| \vec{r}_{S_{1}} \|]
    \\&\leq  \condCbd(\alpha,\beta) \bigg( \sum_{j=0}^{k-1} \max\{d,(k^*)^{j/k^*}\} (\sqrt{\beta})^{j}  \bigg) \E [\| \vec{r}_{S_{1}} \|]
    \\&\leq   \condCbd(\alpha,\beta) \bigg( \sum_{j=0}^{k-1} d (\sqrt{\beta})^{j}  + \sum_{j=0}^{k-1} (k^*)^{j/k^*}  (\sqrt{\beta})^{j}  \bigg) \E [\| \vec{r}_{S_{1}} \|]
    \\&= \condCbd(\alpha,\beta) \bigg( d\frac{1-(\sqrt{\beta})^{k}}{1-\sqrt{\beta}} + \frac{1-(k^*)^{k/k^*}(\sqrt{\beta})^{k}}{1-(k^*)^{1/k^*}\sqrt{\beta}} \bigg) \E [\| \vec{r}_{S_{1}} \|]. 
\end{align*}
Now, note that, by assumption\footnote{Without this assumption we still have a (messy) bound for $R$.}, $(k^*)^{1/k^*} \sqrt{\beta} = (\sqrt{\beta})^{1-\delta} < 1$. Thus,
\begin{equation*}
    d\frac{1-(\sqrt{\beta})^{k^*}}{1-\sqrt{\beta}} + \frac{1-((k^*)^{1/k^*}\sqrt{\beta})^{k}}{1-(k^*)^{1/k^*}\sqrt{\beta}}
    \leq \frac{d+1}{1-(\sqrt{\beta})^{1-\delta}}.
\end{equation*}

It remains to bound $\E[ \| \vec{r}_{S_1}\| ]$, and to do so we again turn to the matrix Bernstein inequality. 
Define the sum
\begin{equation*}
\vec{Z} = \sum_{i \in  S_k} \frac{r_{i}}{p_i} \vec{a}_i
= \vec{X}_1 + \cdots + \vec{X}_B
\end{equation*}
and note that, with $\sigma = \max_i {|r_i|}/{\|\vec{a}_i\|}$, and the assumption \cref{eqn:sampling_probs} on the sampling probabilities,
\begin{equation*}
    \|\vec{X}_i\| 
    \leq \frac{\|\vec{a}_i\| |r_i|}{\|\vec{a}_i\|^2/(\eta \|\vec{A}\|_\F^2)}
    \leq \eta \sigma \|\vec{A} \|_\F^2  .
\end{equation*}
By direct computation we also observe that
\begin{equation*}
   \bigg\| \sum_{i\in S_k} \E \big[ \vec{X}_i^\T \vec{X}_i \big] \bigg\|
    \leq \bigg\| \sum_{i\in S_k} \E \bigg[ \frac{\eta \| \vec{A} \|_\F^2r_i^2}{\|\vec{a}_i\|^2p_i} \vec{a}_i^\T \vec{a}_i \bigg] \bigg\|
    =  \eta B \| \vec{A} \|_\F^2 \sum_{i=1}^{n} r_i^2
    =  \eta B \| \vec{A} \|_\F^2 \| \vec{r} \|^2
\end{equation*}
and, since $\big\| \sum_{i\in S_k} r_i^2 \vec{a}_i \vec{a}_i^\T/\|\vec{a}_i\|^2 \big\| \leq \sum_{i\in S_k} r_i^2$, that
\begin{equation*}
    \bigg\| \sum_{i\in S_k} \E \big[ \vec{X}_i \vec{X}_i^\T \big] \bigg\|
    \leq \bigg\| \sum_{i\in S_k} \E \bigg[ \frac{\eta \| \vec{A} \|_\F^2 r_i^2}{\|\vec{a}_i\|^2p_i}  \vec{a}_i \vec{a}_i^\T \bigg] \bigg\|
    \leq \eta B \| \vec{A} \|_\F^2 \sum_{i=1}^{n} r_i^2
    = \eta B \| \vec{A} \|_\F^2 \| \vec{r} \|^2.
\end{equation*}
Therefore, applying \cref{thm:matrixmoment} we obtain the bound
\begin{equation*}
    \E \big[ \| \vec{r}_{S_k}\| \big] 
    = \frac{1}{B}\E \big[ \|\vec{Z}\| \big]
    \leq 
    \sqrt{\frac{2\eta\|\vec{r}\|^2\|\vec{A}\|_\F^2 \log(d+1)}{B}} + \frac{\eta \sigma \|\vec{A} \|_\F^2  \log(d+1)}{3B}.
\end{equation*}
Combining everything gives the desired result.
\end{proof}

\section{Numerical Results}

We conduct numerical experiments on quadratic objectives \cref{eqn:lls} to illustrate the performance of \ref{stoc_heavyball}.
Throughout this section, we use the value
\begin{equation*}
    B^* = \frac{16 \mathrm{e} \| \vec{A} \|_\F^2 \| \vec{A}\|^2  \log(2d) \alpha^2 }{ \beta \log(1/\beta)}
\end{equation*}
as a \emph{heuristic} for the batch size needed to observe linear convergence at a rate similar to that of \ref{heavyball}.
This value is obtained from \cref{thm:linear_cons} by making several simplifying assumptions. 
Specifically we drop the dependence on $\condCbd(\alpha,\beta) $ which results from a change of basis followed by a return to the original basis (which we believe is likely an artifact of our analysis approach) and replace $k^*/\log(k^*)$ by $1/\log(1/\beta) = \mathcal{O}(\sqrt{\kappa})$.

In all experiments we use $n=10^6$, $d=10^2$ and set $\gamma = \lmin/10^3$.
Each experiment is repeated 100 times and the median and 5th to 95th percentile range for each algorithm/parameter choice are plotted.

\subsection{Row-norm and uniform sampling}

\begin{figure}[htb]
    \centering
    \includegraphics[width=\textwidth]{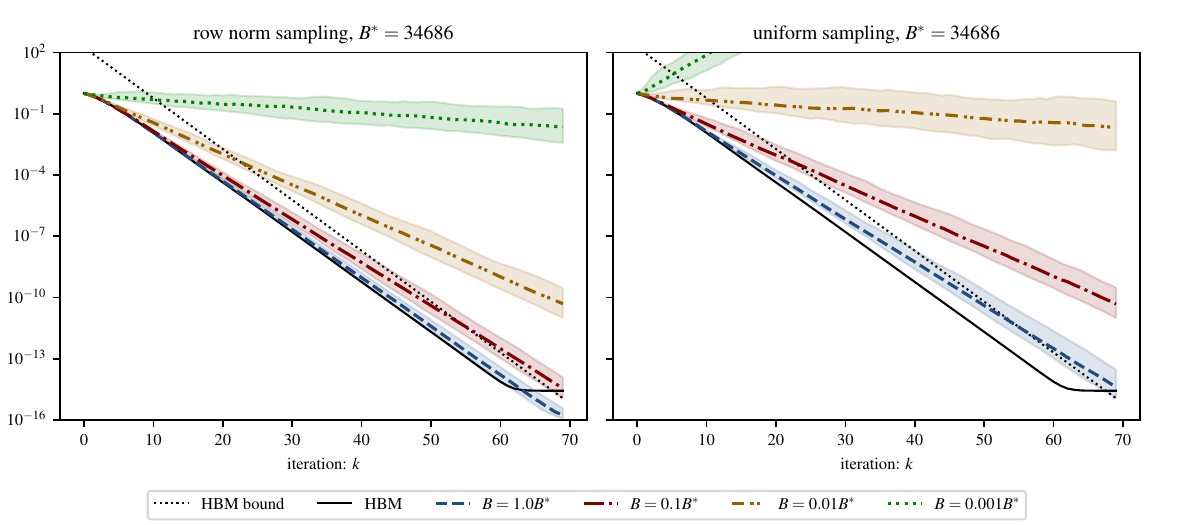}
    \caption{Median and 5th to 95th percentile error norm $\|\vec{x}_k - \vec{x}^*\|$ of \ref{stoc_heavyball} for row norm sampling and uniform sampling at varying values of batch size $B$.
    For reference, we also show the convergence of \ref{heavyball} and the \ref{heavyball} bound \cref{eqn:heavyball_bd}.
    }
    \label{fig:sampling_probs}
\end{figure}

In this example, we study the dependence of \ref{stoc_heavyball} on the sampling probabilities. 
Our bounds are sharpest when $p_j\propto \|\vec{a}_j\|^2$, but in practice it is common to use uniform sampling $p_j = 1/n$ to avoid the need for computing row-norms which requires accessing the entire data matrix.
We take $\vec{A} = \vec{D} \vec{G}$, where $\vec{G}$ is an $n\times d$ matrix whose entries are independently 1 with probability $1/10$ or 0 with probability $9/10$ and $\vec{D}$ is a $n\times n$ diagonal matrix whose diagonal entries are 1 with probability $9/10$ and $10$ with probability $1/10$.
Thus, uniform sampling probabilities $p_j=1/n$ satisfy \cref{eqn:sampling_probs} provided $\eta \geq n \max_j \| \vec{a}_j \|^2/\|\vec{A}\|_\F^2 \approx 23$.
We use a planted solution $\vec{x}^*$ with iid standard normal entries.

In \cref{fig:sampling_probs} we show the convergence of \ref{stoc_heavyball} with row norm sampling and uniform sampling at several values of $B$.
As expected, row norm sampling works better than uniform sampling for a fixed value of $B$.
However, since the norms of rows are not varying too significantly, the convergence rates are still comparable.
See \citep{needell2014stochastic} for a further discussion on sampling probabilities in the context of RK and SGD.

\subsection{Sensitivity to batch size}

The fact \ref{stoc_heavyball} exhibits accelerated convergence is an artifact of batching, and we expect different behavior at different batch sizes.
In fact, we have already observed this phenomenon on the previous example.
In \cref{thm:linear_cons}, we provide an upper bound on the required batch size depending on spectral properties of $\ATA$ such as $\bar{\kappa} = \lave / \lmin$ and $\kappa = \lmax / \lmin$.
To study whether the stated dependence on such quantities is reasonable, we construct a series of synthetic problems with prescribed spectra.

\begin{figure}[htb]
    \centering
    \includegraphics[width=\textwidth]{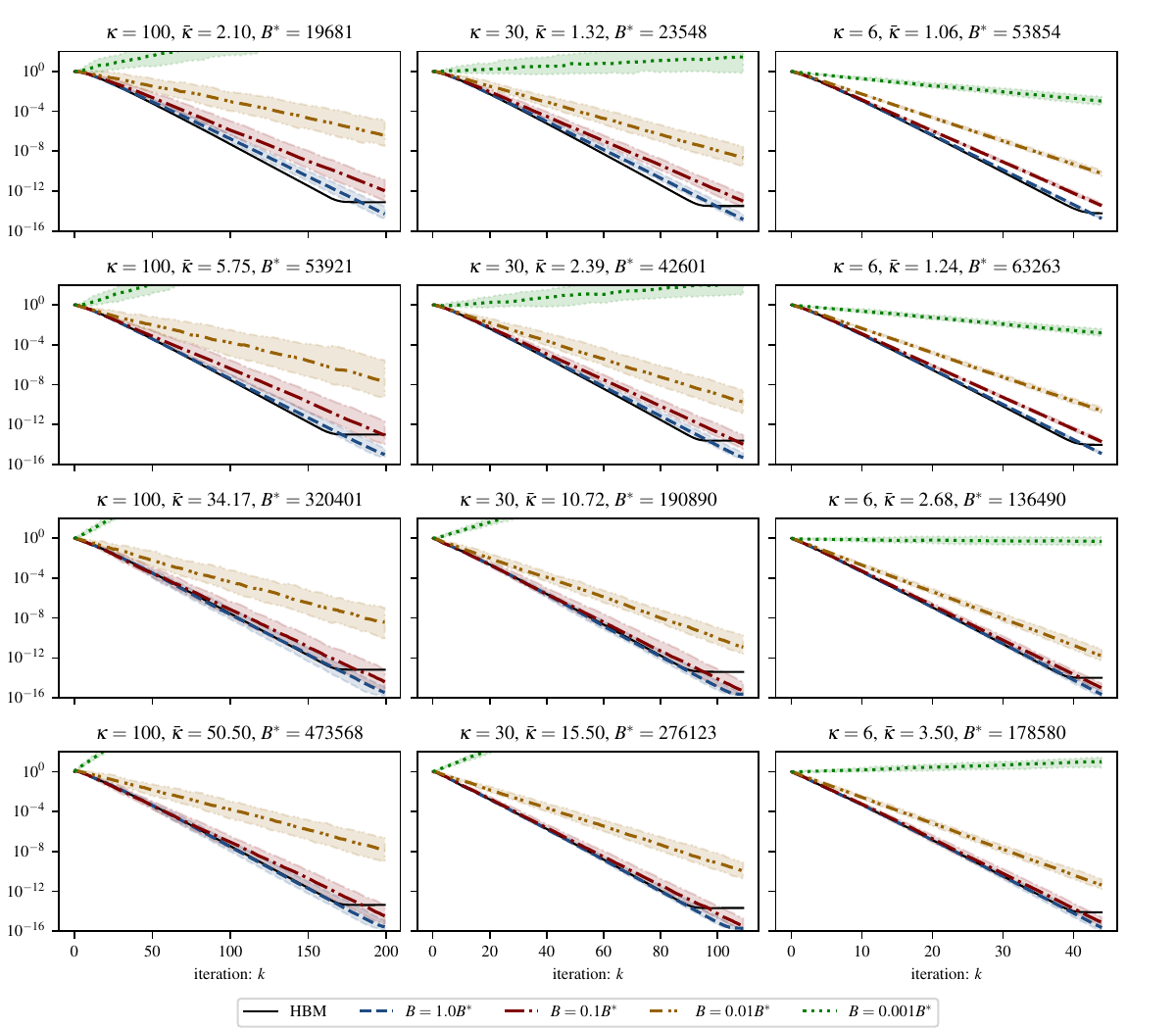}
    \caption{Median and 5th to 95th percentile error norm error norm $\|\vec{x}_k - \vec{x}^*\|$ of \ref{stoc_heavyball} for varying values of batch size $B$ on problems with a range of $\kappa$ and $\bar{\kappa}$.
    For reference, we also show the convergence of \ref{heavyball}.}
    \label{fig:B_dependence}
\end{figure}

\begin{figure}[htb]
    \centering
    \includegraphics[width=\textwidth]{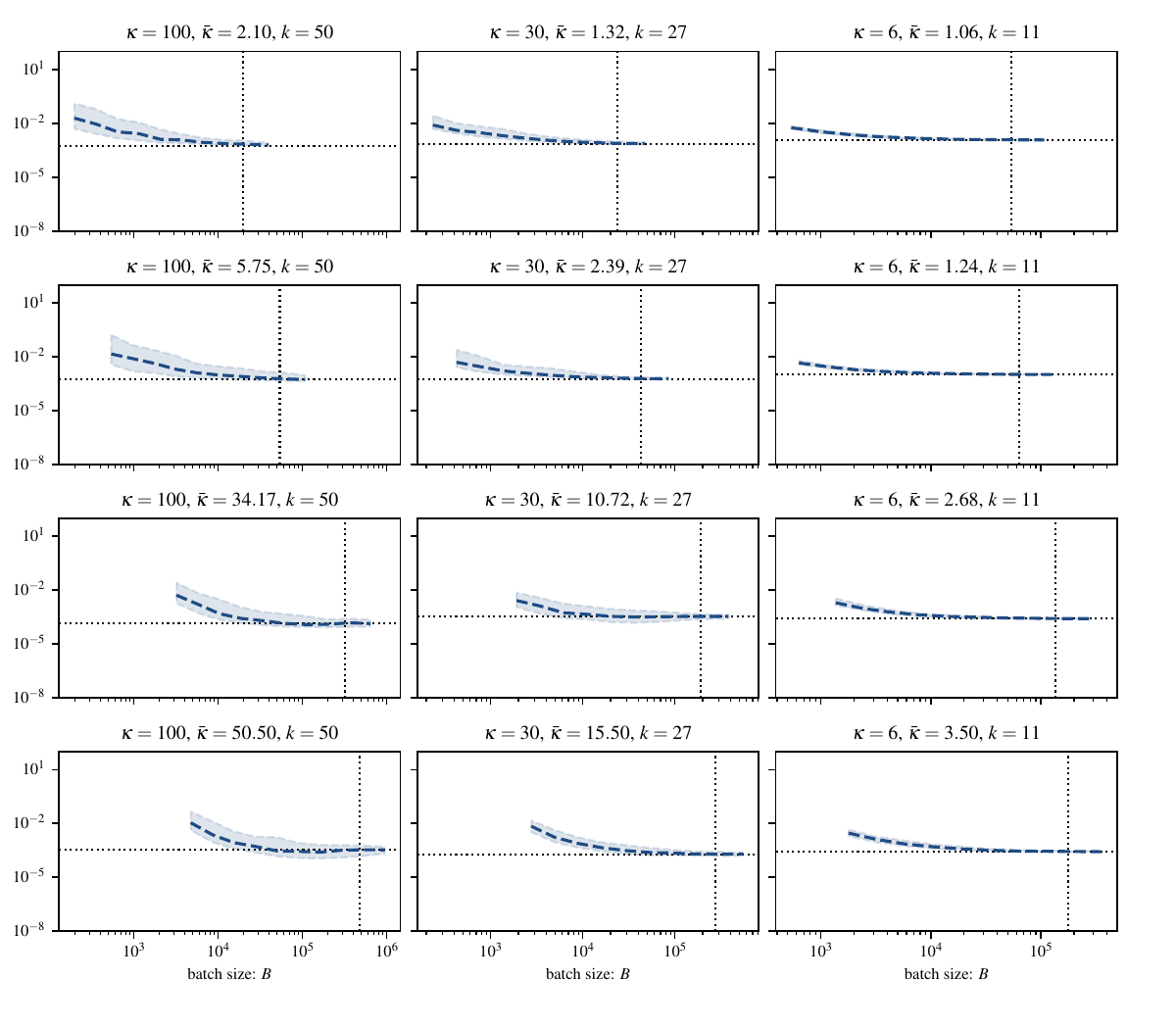}
    \caption{Median and 5th to 95th percentile error norm $\|\vec{x}_{k} - \vec{x}^*\|$ of \ref{stoc_heavyball} for varying values of batch size $B$ on problems with a range of $\kappa$ and $\bar{\kappa}$. The horizontal dotted lines indicate the accuracy of \cref{heavyball}, and the vertical dotted lines indicate $B = B^*$.}
    \label{fig:B_dependence_kfix}
\end{figure}

We construct problems $\vec{A} = \vec{U}\bm{\Sigma}\vec{V}^\T$ by choosing the singular vectors $\vec{U}$ and $\vec{V}$ uniformly at random, and selecting singular values $\{\sigma_i\}$ with exponential or algebraic decay.
For exponential decay we use the squared singular values\footnote{This spectrum is often referred to as the `model problem' in numerical analysis and is commonly used to study the convergence of iterative methods for solving linear systems of equations \citep{strakos_91,strakos_greenbaum_92}.}
\begin{equation}
\label{eqn:exponential_spec}
    \sigma_j^2 = 1 + \left( \frac{j-1}{d-1} \right)  (\kappa -1)  \rho^{d-j}
    ,\qquad j=1,2,\ldots, d,
\end{equation}
and for algebraic decay we use the squared singular values
\begin{equation}
\label{eqn:algebraic_spec}
    \sigma_j^2 = 1 + \left(\frac{j-1}{d-1}\right)^\rho (\kappa-1)
    ,\qquad j=1,2,\ldots, d.
\end{equation}
In both cases, the condition number of the $\ATA$ is $\kappa$ and $\rho$ determines how fast the singular values of $\vec{A}$ decay.
We again use a planted solution $\vec{x}^*$ with iid standard normal entries.

In \cref{fig:B_dependence,fig:B_dependence_kfix} we report the results of our experiments. 
Here we consider consistent equations with condition numbers $\kappa = 10,30,100$.
For each value of $\kappa$, we generate two problems according to \cref{eqn:exponential_spec} with $\rho = 0.1$ and $\rho = 0.8$ and two problems according to \cref{eqn:algebraic_spec} with $\rho=2$ and $\rho=1$.
In \cref{fig:B_dependence} we run \ref{stoc_heavyball} (row norm sampling) with $B=cB^*$ for $c=10^{-3},10^{-2},10^{-1},10^0$ and show the convergence as a function of the iterations $k$.
In \cref{fig:B_dependence_kfix} we run \cref{stoc_heavyball} for a fixed number of iterations, and show the convergence as a function of the batch size $B$.
This empirically illustrates that when the batch size is near $B^*$, the rate of convergence is nearly that of \cref{heavyball}.

\subsection{Inconsistent systems}

For inconsistent systems, stochastic gradients at the optimum will not be zero and convergence is only possible to within the so-called \emph{convergence horizon}.
In this experiment we sample $\vec{A}$ as in the previous experiment using \cref{eqn:exponential_spec} with $\kappa = 50$ and $\rho = 0.5$. 
We take $\vec{b} = \vec{A}\vec{x} + \vec{\epsilon}$, where $\vec{x}$ is has iid standard normal entries and $\vec{\epsilon}$ is drawn uniformly from the hypersphere of radius $10^{-5}$.
Thus, the minimum residual norm $\|\vec{b} - \vec{A} \vec{x}^* \|$ is around $10^{-5}$. 
We use several different values of $B$.

\begin{figure}[htb]
    \centering
    \includegraphics[width=\textwidth]{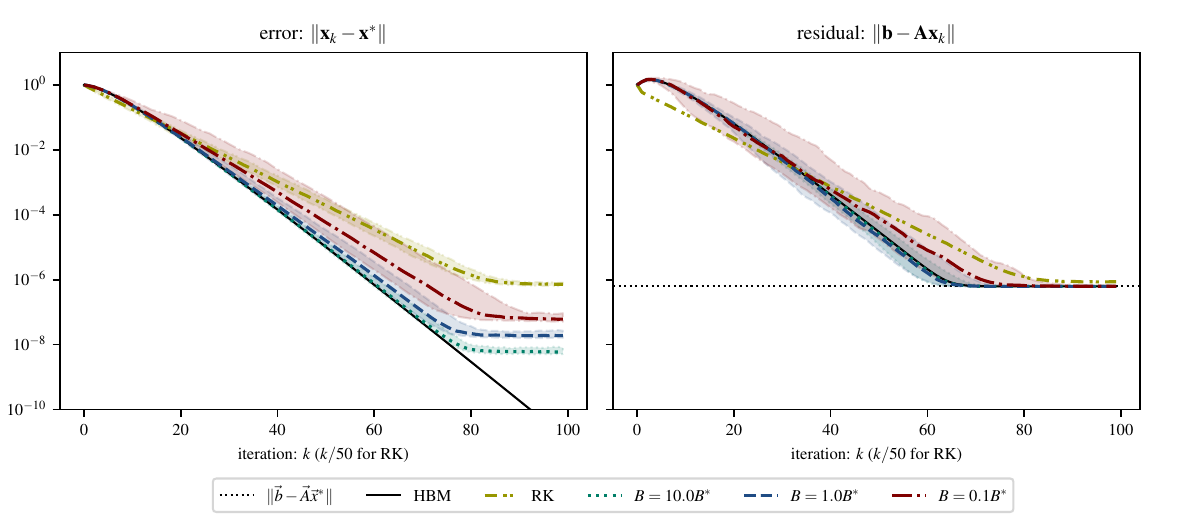}
    \caption{Median and 5th to 95th percentile error norm and residual norm of \ref{stoc_heavyball} for varying values of batch size $B$ on an inconsistent problem.
    For reference, we also show the convergence of \ref{heavyball} as well as optimal residual norm.}
    \label{fig:inconsistent}
\end{figure}

The results of the experiment are shown in \cref{fig:inconsistent}.
For larger batch sizes, the convergence horizon of \ref{stoc_heavyball} gets smaller. 
In particular, when the batch size is increased by a factor of 100, the convergence horizon is decreased by about a factor of 10.
This aligns with the intuition that the convergence horizon should depend on $\sqrt{B}$.
For reference, we also show the convergence for standard RK.
Note RK requires more iterations to converge, although each iteration involves significantly less computation.%
\footnote{While \ref{stoc_heavyball} uses significantly more floating point operations than RK, the runtime to fixed accuracy (for all batch sizes tested) was actually significantly lower than RK due to vectorized operations.
Since runtime is highly system dependent, we do not provide actual timings.}
We also note that, while the error in the iterates stagnates at different points, the value of the objective function is quite similar in all cases, nearly matching the residual norm of the true least squares solution.

\subsection{Computational Tomography}
One of the most prevalent applications of Kaczmarz-like methods is in tomographic image reconstruction, notably in medical imaging.
In X-ray tomography (e.g. CT scans), X-rays are passed through an object and the intensity of resulting X-ray beam is measured.
This process is repeated for numerous known orientations of the X-ray beams relative to the object of interest. 
With a sufficient number of measurements, it is possible to reconstruct a ``slice'' of the interior of the object of interest.
In theory, this reconstruction involves solving a large, sparse consistent linear system.

In this example we consider the performance of \ref{stoc_heavyball} on a tomography problem corresponding to a parallel beam geometry scanner with $128$ sensor pixels. Measurements are taken for $360$ degrees of rotation at half-degree increments, totaling 720 measurements. The goal is to reconstruct a $64\times 64$ pixel image. This results in a measurement matrix of dimensions $(720\cdot 128) \times(64\cdot 64) = 92160\times 4096$ which we construct using the ASTRA toolbox \citep{vanAarle2015}.

We employ a planted solution of a walnut, which we aim to recover from the resulting measurement data.  Uniform sampling is employed due to the roughly similar norms of all rows. 
The step-size $\alpha$ and momentum parameter $\beta$ are chosen so that \ref{heavyball} converges at a reasonable rate and so that $B^* = 407$ is not too large relative to $n$.
In \cref{fig:tomography} we report the convergence of \ref{heavyball} and \ref{stoc_heavyball}, along with the resulting images recovered by the algorithms after $k=500$ iterations.

\begin{figure}[htb]
    \centering
    \includegraphics[width=\textwidth]{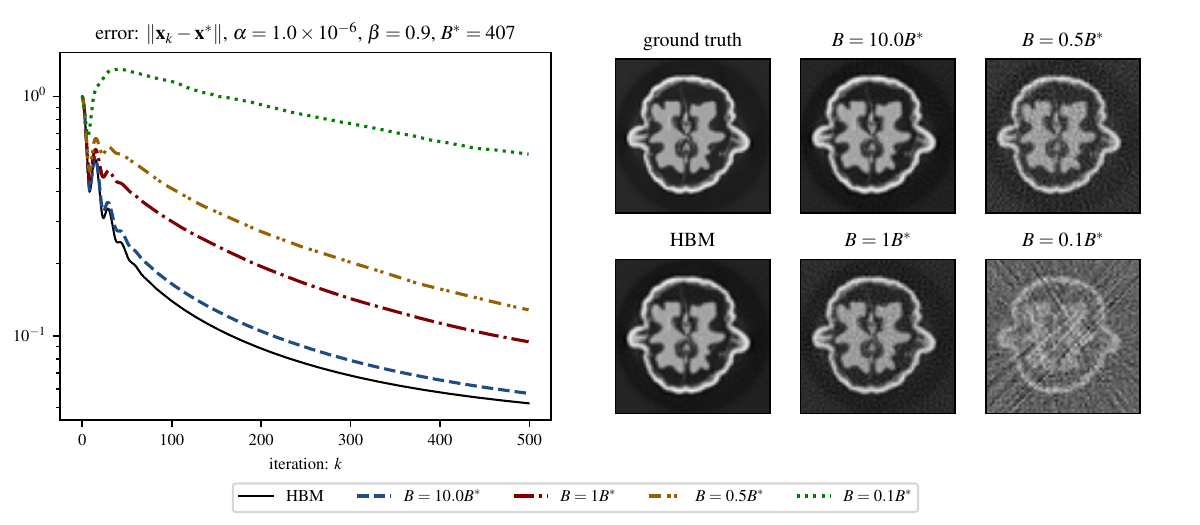}
    \caption{Error $\|\vec{x}_k - \vec{x}^*\|$ of \ref{stoc_heavyball} on a problem from computational tomography at varying batch sizes, and the resulting images of the interior of a walnut recovered after $k=500$ iterations.}
    \label{fig:tomography}
\end{figure}

\section{Conclusion}
We provided a first analysis of accelerated convergence of minibatch heavy ball momentum method for quadratics using standard choices of the momentum step-sizes.  Our proof method involves a refined quantitative analysis of the convergence proof for (deterministic) heavy ball momentum, combined with matrix concentration results for sums and products of independent matrices.  Our proof technique is general, and also can be used to verify the accelerated convergence of a minibatch version of Nesterov's acceleration for quadratics, using the constant step-size parameters suggested in Nesterov's original paper.  An interesting direction for future work is to combine the simple minibatch momentum algorithm with an adaptive gradient update such as AdaGrad \citep{duchi2011adaptive,ms2010, ward2019adagrad,defossez2020convergence}, to potentially learn the momentum parameters $\alpha_k$ and $\beta_k$ adaptively.  Such an analysis would also shed light on the convergence of ADAM \citep{kingma2014adam}, an extension of momentum which combines adaptive gradients and momentum in a careful way to achieve state-of-the-art performance across various large-scale optimization problems.    

\section*{Acknowledgments}
We thank Qijia Jiang and Stephen Wright for helpful comments during the preparation of this manuscript.

\section*{Funding}
R. Bollapragada was supported by NSF DMS 2324643. T. Chen was supported by NSF DGE 1762114.  R. Ward was partially supported by AFOSR MURI FA9550-19-1-0005, NSF DMS 1952735, NSF HDR 1934932, and NSF CCF 2019844.

\appendix

\section{Analysis of Nesterov's acceleration for quadratics}
\label{sec:NAG}

Another common approach to accelerating the convergence of gradient descent is Nesterov's accelerated gradient descent (NAG).
\ref{eqn:NAG} uses iterates
\begin{equation*}
    \vec{x}_{k+1} = \vec{y}_{k}- \alpha_k \nabla f(\vec{y}_k)
    ,\qquad
    \vec{y}_{k+1} = \vec{x}_{k+1} + \beta_k(\vec{x}_{k+1} - \vec{x}_k)
\end{equation*}
or equivalently,
\begin{equation}
    \label{eqn:NAG}
    \tag{NAG}
    \vec{x}_{k+1} = \vec{x}_k + \beta(\vec{x}_{k} - \vec{x}_{k-1})- \alpha  \big(\ATA (\vec{x}_k + \beta(\vec{x}_{k} - \vec{x}_{k-1})) - \vec{A}^\T \vec{b} \big)
\end{equation}
Therefore, a computation analogous to the above computation for \ref{heavyball} shows that the \ref{eqn:NAG} iterates satisfy the transition relation
\begin{equation}
\begin{bmatrix}\vec{x}_{k+1} - \vec{x}^{*} \\ \vec{x}_{k} - \vec{x}^{*} \end{bmatrix} 
=  \underbrace{ \begin{bmatrix} (1+ \beta ) (\vec{I} - \alpha \ATA) & - \beta (\vec{I} - \alpha \ATA) \\
\vec{I} & \vec{0} \end{bmatrix}}_{\vec{T} = \vec{T}(\alpha, \beta) } \begin{bmatrix} \vec{x}_{k} - \vec{x}^{*} \\\vec{x}_{k-1} - \vec{x}^{*} \end{bmatrix}.
\end{equation}
Again, $\vec{T}$ is unitarily similar to a block diagonal matrix whose blocks are
\begin{equation*}
    \vec{T}_j = 
    \begin{bmatrix}
    (1+\beta)(1-\alpha \lambda_j) & -\beta(1-\alpha \lambda_j) \\ 1 & 0
    \end{bmatrix},
\end{equation*}
and it is easy to see the eigenvalues of $\vec{T}_j$ are
\begin{equation*}
    z_j^\pm := \frac{1}{2} \left( (1+ \beta) (1- \alpha \lambda_j)  \pm \sqrt{(1+ \beta)^2 (1- \alpha \lambda_j)^2-4\beta(1- \alpha \lambda_j)} \right).
\end{equation*}
Rather than aiming to optimize the parameters $\alpha$ and $\beta$ as we did for \ref{heavyball}, we will simply use the standard choices of parameters suggested in \citep{Nesterov_2013}:
\begin{align*}
    \alpha = \frac{1}{L}
    \qquad\text{and}\qquad
    \beta = \frac{\sqrt{L/\ell}-1}{\sqrt{L/\ell}+1}.
\end{align*}

By direct computation we find
\begin{equation*}
    \frac{4\beta}{(1+ \beta)^2} = 1 - \frac{1}{L/\ell}
\end{equation*}
which implies 
\begin{equation*}
    (1+ \beta)^2 (1- \alpha \lambda_j)^2 \leq 4\beta(1- \alpha \lambda_j)
\end{equation*}
and therefore that
\begin{align*}
    |z_j^\pm| 
    &= \sqrt{\beta(1-\alpha \lambda_j)}
    \leq \sqrt{\beta(1-\alpha \ell)}
    = 1 - \frac{1}{\sqrt{L/\ell}}.
\end{align*}
Thus, the \ref{eqn:NAG} iterates satisfy the convergence guarantee
\begin{equation*}
    \frac{\left\|\vec{x}_{k+1} - \vec{x}^{*} \right\|}{\left\|  \vec{x}_0 - \vec{x}^{*} \right\|}
    \leq \sqrt{2} \condCbd(\alpha,\beta) \left( 1 - \frac{1}{\sqrt{L/\ell}} \right)^{k},
\end{equation*}
where $\condCbd(\alpha,\beta) $ is the eigenvector condition number of the transition matrix $\vec{T}$ (note that this value is different from the value for \ref{heavyball} bounded in \cref{thm:HBM_condno}).

We now provide a bound, analogous to \cref{thm:linear_cons}, for minibatch-NAG, the algorithm obtained by using the stochastic gradients \cref{eqn:mb_gradient} in \cref{eqn:NAG}.
\begin{theorem} \label{thm:linear_cons_NAG}
Set $\ell = \lmin$, $L = \lmax$ and define $\alpha = 1/L$ and $\beta = (\sqrt{L/\ell}-1)/(\sqrt{L/\ell}+1)$.
For any $k^*>1$  choose
\begin{equation*}
    B 
    \geq 16 \mathrm{e} \eta \log(2d) \max\left\{ \frac{5 \| \vec{A} \|_\F^2 \| \vec{A} \|^2\alpha^2 K^2 k^*}{ \beta \log(k^*)}, \left( \frac{10\| \vec{A} \|_\F^4 \alpha^2 K^2 k^*}{\beta \log(k^*)} \right)^{1/2} \right\}.
\end{equation*}
Then, for all $k>0$, assuming that the minimizer $\vec{x}^*$ satisfies $\vec{A}\vec{x}^* = \vec{b}$, the \ref{eqn:NAG} iterates satisfy
\begin{equation*}
\E \big[ {\left\|\vec{x}_{k} -\vec{x}^{*} \right\|} \big]
\leq \sqrt{2} \condCbd(\alpha,\beta) \max\{d,(k^*)^{k/k^*}\} \left(1- \frac{1}{\sqrt{L/\ell} + 1}\right)^{k} {\left\| \vec{x}_0 -\vec{x}^{*}\right\|}.
\end{equation*}
\end{theorem}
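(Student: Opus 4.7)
The plan is to follow the proof of \cref{thm:linear_cons} essentially line by line, adapting the handful of structural differences introduced by the \ref{eqn:NAG} update. First, since $\vec{A}\vec{x}^* = \vec{b}$, I would write the minibatch gradient as $\nabla f_{S_k}(\vec{x}_k) = \vec{M}_{S_k}(\vec{x}_k - \vec{x}^*)$ with $\vec{M}_{S_k}$ as in the proof of \cref{thm:linear_cons}, and rewrite the \ref{eqn:NAG} iterates using the stochastic transition matrix
\begin{equation*}
    \vec{Y}_{S_k} = \begin{bmatrix} (1+\beta)(\vec{I} - \alpha \vec{M}_{S_k}) & -\beta(\vec{I} - \alpha \vec{M}_{S_k}) \\ \vec{I} & \vec{0} \end{bmatrix},
\end{equation*}
whose expectation equals the deterministic transition matrix $\vec{T}$ from the discussion preceding the theorem. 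Conjugating by the eigenvector matrix $\vec{U}\vec{C}$ of $\vec{T}$ yields $\vec{X}_{S_i} = (\vec{U}\vec{C})^{-1} \vec{Y}_{S_i} (\vec{U}\vec{C})$ and $\vec{Z}_k = \vec{X}_{S_k} \cdots \vec{X}_{S_1}$, with $\|\E[\vec{X}_{S_i}]\| = \|\vec{D}\|$ controlled by the spectral calculation already performed in the excerpt.

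The only substantive difference from the HBM proof is the bound on the stochastic perturbation. A direct block computation gives
\begin{equation*}
    \vec{Y}_{S_k} - \vec{T} = \alpha \begin{bmatrix} (1+\beta)(\ATA - \vec{M}_{S_k}) & -\beta(\ATA - \vec{M}_{S_k}) \\ \vec{0} & \vec{0} \end{bmatrix},
\end{equation*}
and bounding the induced $2\times 2$ scalar factor by $\sqrt{(1+\beta)^2 + \beta^2} \leq \sqrt{5}$ for $\beta \in [0,1]$ yields $\|\vec{Y}_{S_k} - \vec{T}\| \leq \sqrt{5}\,\alpha \|\vec{W}\|$, where $\vec{W} = \ATA - \vec{M}_{S_k}$ is the matrix from \cref{thm:sqnormbd}. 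Composing with the change of basis gives
\begin{equation*}
    \sqrt{\E\bigl[ \|\vec{X}_{S_i} - \E[\vec{X}_{S_i}]\|^2 \bigr]} \leq \sqrt{5}\,\alpha\, \condCbd(\alpha,\beta)\, \sqrt{\E[\|\vec{W}\|^2]}.
\end{equation*}
This is exactly the HBM estimate with an extra factor of $\sqrt{5}$, which accounts for the constants $5$ and $10$ appearing inside the $\max$ in the theorem statement.

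From here the argument proceeds in lockstep with the HBM proof. I would impose $\sqrt{\E[\|\vec{X}_{S_i}-\E\vec{X}_{S_i}\|^2]} \leq \delta$ via \cref{thm:sqnormbd}, apply \cref{prop:products} to $\vec{Z}_k$ with $q_i$ chosen as the rate $1 - 1/(\sqrt{L/\ell}+1)$ (which upper bounds $\|\vec{D}\|$) and $\sigma_i = \delta/q_i$, and calibrate $\delta^2$ as a small multiple of $\beta \log(k^*)/k^*$ so that $2v \leq k\log(k^*)/k^*$. The output of \cref{prop:products} is then $\E[\|\vec{Z}_k\|] \leq q^k \max\{k, (k^*)^{k/k^*}\}$, and undoing the change of basis contributes the final factor of $\condCbd(\alpha,\beta)$, delivering the advertised bound on $\E\|\vec{x}_k-\vec{x}^*\|$.

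The main technical obstacle I anticipate is ensuring that $\vec{T}$ is actually diagonalizable so that $\vec{U}\vec{C}$ exists. The discriminant $(1+\beta)^2(1-\alpha\lambda_j)^2 - 4\beta(1-\alpha\lambda_j)$ vanishes at both $\lambda_j = \lmin$ and $\lambda_j = \lmax$ under the stated parameter choices, producing defective $2\times 2$ blocks at the spectral extremes. This is the exact analogue of the issue that forced the $\gamma$-perturbation in \cref{eq:params_L} for HBM, and the same remedy applies here: replace $\ell = \lmin$ and $L = \lmax$ by $\ell = \lmin - \gamma$ and $L = \lmax + \gamma$ for some small $\gamma \in (0,\lmin)$. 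This strictly separates the spectrum from the boundary of the ``complex eigenvalue'' region, renders $\vec{T}$ diagonalizable with finite eigenvector condition number $\condCbd(\alpha,\beta)$, and perturbs the stated rate and constants by amounts that vanish as $\gamma \to 0^+$.
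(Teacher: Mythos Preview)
Your approach matches the paper's proof exactly: factor the NAG perturbation as a block-diagonal matrix times a $2\times 2$ block to extract the $\sqrt{(1+\beta)^2+\beta^2}\leq\sqrt{5}$ constant, then reuse the proof of \cref{thm:linear_cons} verbatim with $\delta^2=\beta\log(k^*)/(2k^*)$. Your observation about diagonalizability at the spectral extremes is correct and is in fact a gap in the paper's own statement (which sets $\ell=\lmin$, $L=\lmax$ exactly, making the blocks at $\lambda_j\in\{\lmin,\lmax\}$ defective); the paper's proof does not address this, but your proposed $\gamma$-perturbation is the natural fix.
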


The proof of \cref{thm:linear_cons_NAG} is almost the same as the proof identical to the proof of \cref{thm:linear_cons}, so we skip repeated parts.
\begin{proof}
For NAG we have that
\begin{align*}
\vec{Y}_{S_i} - \E[\vec{Y}_{S_i}]
&=\sum_{j\in S_i}\frac{1}{B}
\begin{bmatrix} (1+ \beta ) \alpha ( - p_j^{-1} \vec{a}_j\vec{a}_j^\T + \ATA) & - \beta \alpha ( - p_j^{-1} \vec{a}_j\vec{a}_j^\T + \ATA) \\
\vec{0} & \vec{0} \end{bmatrix}
\\
&=\sum_{j\in S_i}\frac{\alpha }{B}
\begin{bmatrix}
-p_j^{-1} \vec{a}_j\vec{a}_j^\T +\ATA  \\
& \vec{0}
\end{bmatrix}
\begin{bmatrix} (1+ \beta ) \vec{I}  & - \beta \vec{I} \\
\vec{I} & \vec{0} 
\end{bmatrix}
\\
&=
\alpha
\left(\sum_{j\in S_i}
\begin{bmatrix}
\vec{W}_j  \\
& \vec{0}
\end{bmatrix}
\right)
\begin{bmatrix} (1+ \beta ) \vec{I}  & - \beta \vec{I} \\
\vec{0}  & \vec{0} \end{bmatrix}
\end{align*}
Note that, since $\sqrt{\beta}\leq 1$,
\begin{equation*}
    \left\| \begin{bmatrix} (1+ \beta ) \vec{I}  & - \beta \vec{I} \\
\vec{0}  & \vec{0} \end{bmatrix} \right\|
= \left\| \begin{bmatrix} (1+ \beta )  & - \beta  \\
0&0  \end{bmatrix} \right\|
= \sqrt{1+2\beta + 2\beta^2}
\leq \sqrt{5}.
\end{equation*}
Thus, using the submultiplicativity of the operator norm, analogous to \cref{eqn:vY_bound}, we have that
\begin{equation*}
    \sqrt{\E \big[ \| \vec{X}_{S_i} - \E [\vec{X}_{S_i} ] \|^2 \big] }
    \leq \sqrt{\condCbd(\alpha,\beta) \E\big[ \| \vec{Y}_{S_i} - \E [\vec{Y}_{S_i}] \|^2 \big]}
    \leq \sqrt{5}\alpha \condCbd(\alpha,\beta) \sqrt{\E \big[ \| \vec{W} \|^2 \big]}.
\end{equation*}
Again, using \cref{thm:sqnormbd,eqn:vY_bound}, we see that $\sqrt{\E\|\vec{X}_{S_i} - \E \vec{X}_{S_i}\|^2} \leq \delta$ provided that the batch size $B$ satisfies
\begin{equation*}
    B \geq {8 \mathrm{e}\eta \log(2d)} \max\big\{ 5 \| \vec{A} \|_\F^2  \|\vec{A}\|^2 \alpha^2 \condCbd(\alpha,\beta) ^2 \delta^{-2} , ( 20 | \vec{A} \|_\F^4 \alpha^2 \condCbd(\alpha,\beta) ^2 \delta^{-2} )^{1/2} \big\}.
\end{equation*}
Using the same choice of $\delta^2 = \beta \log(k^*)/(2k^*)$ we get the desired bound. 
\end{proof}

\bibliographystyle{apalike}
\bibliography{references}

\end{document}